\newcommand{\domain}[1]{\ensuremath{\mathrm{Dom}_#1}}
\newcommand{\codomain}[1]{\ensuremath{\mathrm{Co}_#1}}
\newcommand{\pre}[1]{\ensuremath{\mathrm{Pre}}_#1}
\newcommand{\eff}[1]{\ensuremath{\mathrm{Eff}}_#1}
\newcommand{\length}[1]{\ensuremath{\mathrm{length}(#1)}}
\newcommand{\element}[1]{\ensuremath{\mathrm{element}(#1)}}
\newcommand{\CP}[1]{\ensuremath{\mathrm{CP}_{#1}}}
\title{Lifted Sequential Planning with Lazy Constraint Generation Solvers} %TODO Please add
\titlerunning{LCG Plan} %TODO optional, please use if title is longer than one line
\author{Anubhav Singh}{School of Computing and Information Systems \and University of Melbourne, Australia \and \url{https://github.com/anubhav-cs} }{anubhavs@student.unimelb.edu.au}{https://orcid.org/0000-0003-2770-801X}{This  research  was  supported  by  use  of  the  Nectar  Research Cloud, a collaborative Australian research platformsupported  by  the  National  Collaborative  Research  Infras-tructure Strategy (NCRIS). Anubhav Singh is supported by Melbourne Research Scholarship established by the University of Melbourne.}%TODO mandatory, please use full name; only 1 author per \author macro; first two parameters are mandatory, other parameters can be empty. Please provide at least the name of the affiliation and the country. The full address is optional. Use additional curly braces to indicate the correct name splitting when the last name consists of multiple name parts.
\author{Miquel Ramirez}{Electrical and Electronic Engineering \and University of Melbourne, Australia \and \url{https://findanexpert.unimelb.edu.au/profile/778610-miquel-ramirez-javega} }{miquel.ramirez@unimelb.edu.au}{https://orcid.org/0000-0002-1838-7982}{}
\author{Nir Lipovetzky}{School of Computing and Information Systems \and University of Melbourne, Australia \and \url{https://nirlipo.github.io/}}{nir.lipovetzky@unimelb.edu.au}{https://orcid.org/0000-0002-8667-3681}{}
\author{Peter J. Stuckey}{Faculty of Information Technology \and Monash University, Australia \and \url{https://research.monash.edu/en/persons/peter-stuckey} }{peter.stuckey@monash.edu}{https://orcid.org/0000-0003-2186-0459}{}
\authorrunning{A. Singh, N. Lipovetzky, M. Ramirez and P. Stuckey} %TODO mandatory. First: Use abbreviated first/middle names. Second (only in severe cases): Use first author plus 'et al.'
\keywords{Classical Planning, Constraint Programming, Lazy Clause Generation, Optimal Planning} %TODO mandatory; please add comma-separated list of keywords
\begin{document}

\maketitle

%TODO mandatory: add short abstract of the document
\begin{abstract}
    This paper studies the possibilities made open by the use of
    Lazy Clause Generation (LCG) based approaches to Constraint Programming
    (CP)  for tackling \emph{sequential classical planning}.
    We propose a novel CP model based on seminal ideas on so-called
    lifted causal encodings for planning as satisfiability, that does
    not require grounding, as choosing groundings for functions and action schemas becomes
    an integral part of the problem of designing valid plans. This encoding does not
    require encoding frame axioms, and does not explicitly represent states as decision
    variables for every plan step.
    We also present a propagator procedure that illustrates the possibilities of LCG to
    widen the kind of inference methods considered to be feasible in planning as (iterated)
    CSP solving. We test encodings and propagators over classic IPC and recently proposed benchmarks for
    % TODO: Contingent on Results from Anu
    lifted planning, and report that for planning problem instances requiring fewer
    plan steps our methods compare very well with the state-of-the-art in optimal
    sequential planning.
    \end{abstract}

\section{Introduction}
\label{section:introduction}
One of the most significant advances in  classical planning
was the realisation of Green's~\cite{green:gps} vision for theorem proving as a
framework for general problem solving, via the ground-breaking seminal work
of H. Kautz and B. Selman~\cite{kautz:sat_early,kautz:encoding}.
Putting together the insights of  Blum and Furst~\cite{blum:graphplan}, where planning becomes
the problem of analysing whether goal states are reachable in a suitably defined graph,
with space efficient solutions to the \emph{frame problem} formulated in the
Situation Calculus~\cite{mccarthy:frame,haas:frame}, Kautz \& Selman showed that
formulating planning problems in terms of satisfiability of Conjunctive Normal Form (CNF)
formulas was feasible and, at the time, highly scalable. Attention to planning
as satisfiability has been somewhat eclipsed since then with the development of planning algorithms
based on direct, but lazy, incremental heuristic search over transition
systems~\cite{bonet:heuristics,hoffmann:ff,helmert:fd,richter:lama,torralba:symba}. Yet
deep theoretical connections exist between planning as satisfiability and as heuristic
search~\cite{geffner:planning_graphs,rintanen:admissible} questioning~\cite{rintanen:heuristics,suda:pdr} perceptions
of either approach as being parallel or mutually exclusive.

Like R. Frost's traveller in the woods this paper finds its way back to a crossroads in the
development and study of approaches to planning as satisfiability. One way is that  of the
so-called \emph{GraphPlan} encodings (linear and parallel),
the other being that of so-called \emph{causal} encodings (ground and lifted) invented
by Kautz, McAllester and Selman (KMS)~\cite{kautz:encoding}. Unlike Frost's poem though, lifted
causal encodings are
clearly an approach seldom followed in the literature
in automated planning. Inspired by the formulation of \emph{nonlinear} or partially-ordered planning
of McAllester and Rosenblitt~\cite{mcallester:nonlin},
these encodings have polynomial size  over the lower bound on the number of plan steps in
feasible plans. Importantly too, they do away entirely with the need for explanatory frame axioms. Yet
these very desirable properties follow from the premise of not having to ground actions and
predicates first, as otherwise the unavoidable exponential blow outs obliterate any practical
differences with GraphPlan encodings.

In this paper we realize this potential by tapping into the power of Lazy Clause Generation
(LCG)~\cite{ohrimenko:lcg}, a ground-breaking technology that unifies propositional satisfiability (SAT) and
Constraint Programming (CP), and allows representing implicitly large tracts of
complex systems of constraints by suitably defined inference procedures, or
\emph{propagators}. These lazily generate new constraints to record violations
by assignments to decision variables, and propagate information following from the
consistency of assignments and constraints in order to tighten the domains of variables. This, in addition to
very sophisticated and performant modeling tools and solvers~\cite{software:ortools},
provide us the foundations to develop scalable planners that follow the path laid by KMS lifted causal encodings. We have found
these planners to
% TODO:
clearly outperform state-of-the-art optimal planning algorithms on
benchmarks designed to be \emph{hard to ground}~\cite{correa:lifted}, while standing
their ground on the IPC benchmarks.

\textbf{Paper Overview}. We start the paper with a precise formulation of the kind of planning problems of
interest to us and their solutions. We then introduce a formalism to
describe the structure of states and actions that is based on Functional
STRIPS~\cite{frances:fstrips}. We assume  that all atoms in precondition
and effects are equality atoms over suitably defined function symbols and
constant terms. These ground \emph{domain theories} are then lifted~\cite{mcallester:nonlin}.
With these preliminaries in place, we introduce
our encoding, for which we prove validity and provide a characterisation of
its complexity. After that, we explain how we leverage state-of-the-art CP solvers
to implement efficiently our encoding. We then present a method to do a \emph{concise} transformation of planning instances represented in PDDL to FSTRIPS.
We end with an evaluation of several planners
built on top of our encoding and propagators, along with a brief
analysis of related work and a discussion of the significance and potential of this research.

\section{Formulation of Planning Problems}
\label{section:formulation}

A problem planning instance (PPI) is given by a tuple $P = (S, A, \to, s_0, S_G)$
where $S$ and $A$ are finite sets of states and actions,
$\to$ $\subset$ $S \times A \times S$ is the \emph{transition relation}, where
$s \to_{a} s'$ indicates that $s'$ is reachable from $s$ via $a$,
$s_0 \in S$ is the \emph{initial state} and $S_G \subset S$ is the set of designated \emph{goal states}.
%The \emph{indexing} of the elements of sets
%$S$ and $A$ are the sets of integers $[S] = 1,\ldots,\vert S \vert$ and
%$[A] = 1,\ldots, \vert A \vert$.
We say that a PPI \emph{admits} a trajectory
$\sigma=s_0,a_1,s_1, \ldots, s_{i-1},a_{i}, s_i$ iff $s_{i-1}$ $\to_{a_{i}}$ $s_i$
for every $i > 0$. In this paper the notion of
\emph{planning problems} is that of \emph{optimization problems} where we seek \emph{sequences} $\sigma$ $=$ $s_0,a_1,s_1,\ldots,s_{k-1},a_k,s_k$,
that minimize $\length{\sigma} := k$, and satisfies $s_k \in S_G$.
The set of $\sigma$ sequences that satisfy $s_k \in S_G$ is referred to as the set of \emph{valid plans}
$\Pi_P$. The optimal cost of $P$ is thus $c^{*} := \min_{\sigma \in \Pi_P} \length{\sigma}$.
When $\Pi_P=\emptyset$,
we say that $P$ is infeasible and optimal cost is $c^{*} = \infty$.
We observe that $\vert S \vert - 1$ is a trivial upper bound on $c^{*}$ when $\Pi_P \neq \emptyset$.
Non-trivial, feasibly computable upper bounds are known~\cite{abdulaziz:upper_bounds} but only for PPIs with special structure.

\subsection{Factored Planning Problems}
\label{section:factored_planning}
A long recognised and adopted strategy to deal with large $\vert S \vert$ is that of \emph{factoring}
states and actions in PPIs as a preliminary step to develop
algorithms that can deal with large scale problems~\cite{pearl:heuristics,balcazar:96}.
We now present an account of FSTRIPS, a formalism to define such factorings, where a \emph{domain theory}
expresses assumptions on the structure of states and actions~\cite[Section 2.1]{frances:thesis}.

A PPI $P$ in Functional \textsc{Strips} is defined over a many-sorted first order logic theory with
equality, which we denote as ${\cal L}(P) = (T, \Phi, \Pi)$. The constituents of such a theory
capture relevant properties of and relations between objects, and provide the basic building blocks
to factorize states $s \in S$. $T$ is a finite non-empty set of finite sets called
\emph{types}, or \emph{sorts}, with a possibly infinite set of variables $x_{1}^t$, $x_{2}^t$, $\ldots$ for
each type $t \in T$. The \emph{universe} is the set $U = \cup_{t \in T} t$. $\Phi$ is a set of
\emph{function symbols} $f \in \Phi$, each of which is said
to have a \emph{domain} $\domain{f} \subset t_1 \times \dots t_i \times \times t_{d_f}$ where
$t_i \in T$, and a range $\codomain{f}$ $\in$
$T$. $\Pi$ is a set of \emph{relation symbols}. In this paper, $\Pi$ contains the standard relations
of arithmetic e.g. ``$<$'', ``$\leq$'', ``$>$'', ``$\geq$ '' in addition to equality ``$=$''.
Some PPIs may specify as well \emph{domain specific} relations, along with their denotation,
in addition to the former standard ones. We denote the maximum \emph{function arity} of the domain,
$\max_{f \in\Phi} d_f $,  as $K_f$.
States $s \in S$ in a \textsc{Fstrips} problem $P$ are \emph{semantic structures} that set the interpretation of
formulas over ${\cal L}(P)$ with fixed universe $U$. Thus each $s$ contains the \emph{graph}~\cite{bourbaki:set_theory}
of each function $f \in \Phi$, providing the denotation for \emph{functional terms} $f(\bar{t})$, where
$\bar{t} \in \domain{f}$. We note that $S(P)$ is a finite set since types $t \in T$ are finite sets too.

The transition relation $\to$ is specified via suitably defined \emph{action schemas} $\alpha \in Act$.
$\alpha$ is a non-logical symbol such that $\alpha \not\in \Phi \cup \Pi$. Actions $\alpha(\bar{x})$
capture sets of transitions in $\to$ parametrized by a tuple of typed
variables $\bar{x} = (x_{1}^{t_1}, \ldots, x_{d_\alpha}^{t_{d_\alpha}})$. $\tau(\alpha)$ denotes the tuple of types of parameters of $\alpha$, $\tau(\alpha)=(t_1, \ldots, t_{d_\alpha})$.
For each action schema $\alpha(\bar{x})$ we are given
$\pre{\alpha}(\bar{x})$ a \emph{precondition} formula over ${\cal L}(P)$ and variables $\bar{x}$.
In this
paper we consider a fragment of the formulas considered by~\cite{frances:thesis}, defined by the following
grammar in Backus-Naur form
\begin{align}
	\label{preconditions}
	\pre{\alpha}(\bar{x}) \coloneqq \top \,\mid\, \bigwedge_{i=1}^{card(\pre{\alpha})} f_{i}(\bar{y}_i) = z_i
\end{align}
where $\top$ is the tautology, $card(\pre{\alpha})$ denotes the \emph{number of equality atoms in the formula} $\pre{\alpha}$, $\bar{y}_i \subset \bar{x}$, and $z_i \in U$. Additionally, we are given an
\emph{effect} formula $\eff{\alpha}$ of the form
\begin{align}
	\label{effects}
	\eff{\alpha}(\bar{x}) \coloneqq \bigwedge_{j=1}^{card({\eff{\alpha}})} f_{j}(\bar{y}_j) = z_j
\end{align}
where $card(\eff{\alpha})$ denotes the \emph{number of equality atoms in the formula} $\eff{\alpha}$, $\bar{y}_j \subset \bar{x}$ and $z_i \in U$.

We now explain the FSTRIPS representation for the \emph{visitall} problem domain from the IPCs~\cite{icaps:competitions} in which an agent must visit all the cells in an $n \times n$ square grid starting from the center of the grid. In \emph{Visitall}, we only have one \emph{type}, $T:=\{C\}$, where $C$ is a set of cells in the grid, the set of \emph{function symbols} is $\Phi:=\{at, visited\}$, and it has a \emph{domain specific} relation $\Pi:=\{connected\}$. It has one action schema \emph{move} which allows the agent to move between two \emph{connected} cells of the grid, thus, $Act=\{move\}$. The \emph{move} schema takes two parameters which we denote by $\bar{x}:=(x_1, x_2)$, where $x_1$ is the current position of the agent and $x_2$ is the target position. The precondition and effect formulas of \emph{move} are defined as follows

\begin{subequations}
	\label{move_schema}
	\begin{align}
		\pre{\emph{move}}(\bar{x}) &\coloneqq at()=x_1 \land connected(x_1,x_2)=\top \label{move:pre} \\
		\eff{\emph{move}}(\bar{x}) &\coloneqq  at()=x_2 \land visited(x_2)=\top  \label{move:eff}
	\end{align}
\end{subequations}

\noindent The goal condition of \emph{visitall} requires the agent to {visit} all cells in $C$
\begin{align}
	\label{move:goal}
	Goal_{\emph{move}} \coloneqq  \bigwedge_{c \in C} visited(c) = \top
\end{align}

A set of action schemas $Act$ is \emph{systematic}~\cite{mcallester:nonlin,robinson:resolution_lifting} for a
PPI $P$ if and only if, for every $(s, a, s') \in \to$ there is an action schema $\alpha(\bar{x})$ such
that there exists a vector $\bar{v} \in t_{1} \times \dots \times t_{d_\alpha}$ and the following conditions hold:
\begin{subequations}
	\label{systematicity}
	\begin{align}
		s &  \models \pre{\alpha}(\bar{x})/\bar{v}\label{syst:prec} \\
		s' & \models \eff{\alpha}(\bar{x})/\bar{v}\label{syst:eff} \\
		s' & \models g(\bar{v}) = w,\,\text{when}~ s \models g(\bar{v})=w \land \not \exists w’, \eff{\alpha}(\bar{x})/\bar{v} \models g(\bar{v})=w’\label{syst:frame}
	\end{align}
\end{subequations}
where $\varphi(\bar{x})/\bar{v}$ is the (ground) formula that results from replacing every occurrence of $x_i \in \bar{x}$ by that
of $v_i \in \bar{v}$. The satisfiability relation in~\eqref{syst:prec}--\eqref{syst:frame} is defined in
a standard way~\cite{frances:thesis}. In words, a set of action schemas $Act$ is systematic whenever these
capture every possible reachability (or accessibility) relation between states $s$ and $s'$. We note that one
action schema $\alpha(\bar{x})$ can satisfy the above for many $(s_1, a_1, s_{1}')$, $(s_2, a_2, s_{2}')$, ...,
each of these tuples providing the semantics of \emph{ground action} $\alpha(\bar{v})$. In this paper we further assume
that action schemas do not change the denotation of any symbol in $\Pi$ (see section~\ref{section:cp_implementation} for a discussion
of their treatment in our encoding).

We denote the maximal arity of action schemas in $Act$ as $K_{\alpha} = \max_{\alpha \in Act} d_\alpha$.
The maximal number of equality atoms in precondition formulas is written as
$K_{\mathrm{pre}} = \max_{\alpha \in Act} card(\pre{\alpha})$ (resp.
$K_{\mathrm{eff}} = \max_{\alpha \in Act} card(\eff{\alpha})$ for effects).
% The maximal arity of action schemas is $K_{\alpha} = \max_{\alpha \in Act} d_\alpha$.
% The maximal number of equality atoms in precondition formulas is written as
% $K_{\mathrm{pre}} = \max_{\alpha \in Act} card(\pre{\alpha})$ (resp.
% $K_{\mathrm{eff}} = \max_{\alpha \in Act} card(\eff{\alpha})$ for effects).

% Consider the  \emph{visitall} problem from the
% IPC~\cite{icaps:competitions}, it has a single action schema:
% $move(p_1,p_2,a)$ that moves agent $a$ from position $p_1$ to $p_2$ with
% preconditions
% $at(a,p_1)$ and $connected(p_1,p_2)$ and effects $at(a,p_2)$ and $visited(p_2)$.
% \pjs{Why does connected have a third argument, is it dependent on the
%   agent?}

\section{Planning as Satisfiability}
\label{section:sat_planning}

The approach known as \emph{planning as satisfiability}~\cite{kautz:sat_early}  proceeds by considering a sequence
of instances for a related decision problem, that of \emph{plan existence}.
We state the latter simply as follows: given some PPI $P$ and parameter $N_\pi$, with actions
and states defined in terms of some domain theory, the task is to prove that a feasible trajectory $\sigma$
exists such that $\length{\sigma} = N_\pi$, or alternatively, certify
that no such $\sigma$ exists. The classic algorithm
for optimal planning in this framework thus considers the sequence of CSPs $T_{P, n_0}$, $T_{P, n_1}$, $\ldots$,
$T_{P, n_k}$, $\ldots$ each of these a reduction of the plan existence problem for
$P$ and $N_\pi = n_k$ into that of the \emph{satisfiability} of a CSP $T_{P, n_k}$ with
suitably defined decision variables and constraints.
The sequence of natural numbers $n_0, n_1, \ldots, n_k, \ldots$ is typically, but not
necessarily~\cite{streeter:sat_search},
defined as $n_0$ $=$ $0$, $n_1$ $=$ $1$, and so on. When defined in this manner, as soon as
$T_{P,n_k}$ is satisfiable, we have proven that $c^{*} = n_k$. Scalable
certification of infeasibility in this framework has been an open problem until recently~\cite{eriksson:certification},
yet still remains challenging.

\section{Lifted Causal CP Model}
\label{lifted_causal_cp_model}

\begin{table}[t!]
	\centering
	\begin{tabular}{c|l}
		Symbol & Description \\
		\hline\\
		$act_i$ & Action assigned to slot $i$, $i=1,\ldots, N_\pi$ \\
%		$cost_i$ & Cost of action in slot $i$ \\
		$active_{ik}$ & Pin is active \\
		$arg_{ij}$ & Value of $j$-th argument of slot $i$, $j=1,\ldots, K_{\alpha}$ \\
		$in_{ik}$ & $k$-th input pin data of slot $i$ \\
		$out_{il}$ & $l$-th output pin data of slot $i$ \\
		$spt_{jk}$ & Output pin supporting $k$-th input pin of slot $j$
	\end{tabular}
	\caption{Quick reference table for the decision variables in the model.
	$N_\pi$ is the number of slots, $K_\alpha$ is the (constant)  maximal number of arguments in any
	action schema $\alpha \in Act$. }
	\label{fig:notation_variables}
\end{table}

% We start by giving a high-level account of our proposed encoding of CSPs $T_{P,N_\pi}$
% and explain the roles played by the decision variables in
% Table~\ref{fig:notation_variables}.
% Central to our encoding is the notion
% of plan step or \emph{slot}, of which we have one for \emph{each} action in
% a plan. In contrast with the causal encoding of KMS,
% slots are \emph{totally ordered} thus greatly simplifying the definition of persistence
% that we use to deal with the frame axioms. To each slot \emph{exactly one}
%  action schema $\alpha \in Act$ needs to be assigned (variables $act_i$), which in
% turn restricts choices on the possible values for the \emph{arguments}
% (variables $arg_{ij}$) of the schema $\alpha$ as per $\tau(\alpha)$. These
%  determine the choices of \emph{input and output pins} for a slot.
% A \emph{pin} is a vector of decision variables we use to represent equality
% atoms $f(\bar{y}) = z$. These variables choose the function symbol $f$, terms
% (constants in $U$) $\bar{y}$ (a vector)  and $z$. Input pins of slot $i$ thus encode
% the equality atoms in $\pre{\alpha}(\bar{x})/\bar{v}$ required to be true in state $s_{i-1}$, and output
% pins the atoms in $\eff{\alpha}(\bar{x})/\bar{v}$ required
% to be true in $s_i$, where $\alpha(\bar{v})$ is the ground action selected by the assigned schema and
% (possibly partially) assigned values to arguments. A trivial static mapping
% between schema

We start by giving a high-level account of our proposed encoding of CSPs $T_{P,N_\pi}$
and explain the roles played by the decision variables in
Table~\ref{fig:notation_variables}.
Central to our encoding is the notion
of plan step or \emph{slot}, of which we have one for \emph{each} action in
a plan. In contrast with the causal encoding of KMS,
slots are \emph{totally ordered} thus greatly simplifying the definition of persistence
that we use to deal with the frame axioms.
To each slot \emph{exactly one}
 action schema $\alpha \in Act$ needs to be assigned (variables $act_i$), which in
turn restricts choices on the possible values for the \emph{arguments}
(variables $arg_{ij}$) of the schema $\alpha$ as per $\tau(\alpha)$.
The assignments to the variables $act_i$ and $arg_{ij}$
 determine the choices of \emph{input and output pins} for a slot.
A \emph{pin} is a vector of decision variables which we use to represent equality
atoms $f(\bar{y}) = z$. These variables choose the function symbol $f$, terms
(constants in $U$) $\bar{y}$ (a vector) and $z$.
Input pins of slot $i$ thus encode
the equality atoms in $\pre{\alpha}(\bar{x})/\bar{v}$ required to be true in state $s_{i-1}$, and output
pins the atoms in $\eff{\alpha}(\bar{x})/\bar{v}$ required
to be true in $s_i$, where $\alpha(\bar{v})$ is the ground action selected by the assigned schema and
(possibly partially) assigned values to arguments.
% A trivial static mapping between schemas and pins in slots is used to avoid introducing undue symmetries.
These dependencies between the variables of a slot  are depicted in Figure~\ref{fig:slot_constraint_graph}(a),
and
Figure~\ref{fig:slot_constraint_graph}(b) illustrates the \emph{active} constraints between variables when the \emph{move} action schema is chosen at slot $i$ in the \emph{visitall} problem domain. The \emph{move} schema has two arguments representing the current position and the target position of an agent in an $n \times n$ grid. Thus, when $act_i=\emph{move}$, we require that $arg_{i1} \in C$ and $arg_{i2} \in C$, the input pin $in_{i1}$ is assigned the function symbol $at$ and the variable $y_{i1}$ holds an equality relation with $arg_{i1}$.

We note that
% As depicted in Figure~\ref{fig:slot_constraint_graph}(a),
we create up front variables for arguments and pins following from
$K_{\alpha}$, $K_f$, $K_{\mathrm{pre}}$ and $K_{\mathrm{eff}}$, all
constants given by the data in an instance $P$. For a given slot $i$,
argument variables $arg_{ij}$ that are not used by the schema assigned are set
to a special $\mathrm{null}$ constant. To disable pins not needed to represent
atoms in preconditions or effect formulas we have a Boolean variable $active_{ik}$ that indicates
if they are being used. Finally, variables $spt_{jk}$
allow choosing what output pin is supporting a given input pin. These variables
allow encoding \emph{causal links}~\cite{mcallester:nonlin} without referring
explicitly to atoms.

\subsection{Variables and Constraints}
\label{section vars_and_constraints}

\begin{figure}[t!]
	\centering
	\hspace{-1cm}

	% \centering
  %   \subfloat[\centering label 1]{{\includegraphics[width=6cm]{img/slot_constraint_graph.pdf} }}%
  %   \qquad
  %   \subfloat[\centering \emph{Visitall}: dependencies\emph{move} schema]{{\includegraphics[width=6cm]{img/slot_constraint_graph_visitall.pdf} }}%

{\includegraphics[width=1\textwidth]{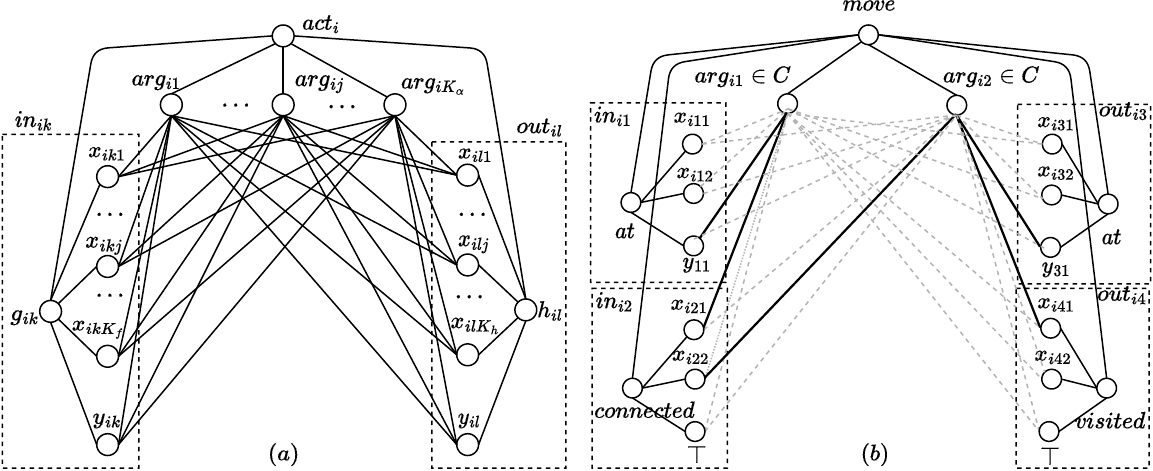} }

	\caption{$(a)$~Constraint graph depicting dependencies between the decision variables to model
		slots (plan steps). There is one vertex for every decision variable, and there is an edge
		between two variables whenever they appear together in the definition of at least one
		constraint. $(b)$~Active constraints when the \emph{move} action schema is chosen in \emph{visitall}.}
	\label{fig:slot_constraint_graph}
\end{figure}

We now give a formal and precise account of the variables and constraints in the model.
Let $N_\pi$ be the maximal number of slots in a valid plan. For every slot $i=1,\ldots,N_\pi$
we have integer variables $act_i \in [Act]$\footnote{We use the notation $[S]$ to designate the \emph{indexing} of
the elements of a set, e.g. $[S]=1,2,\ldots$ for $S=\{e_1, e_2, \ldots \}$.}
to choose the action schema $\alpha \in Act$ assigned to it. Argument variables $arg_{ij}$
 select the values of the variables introduced by lifting, and their domains
correspond to the types $\tau_j(\alpha)$, whenever $act_i = \alpha$
\begin{align}
	\label{argument_binding}
	(act_i = \alpha) \rightarrow arg_{ij} \in [\tau_j(\alpha)],\, j=1, \ldots, d_{\alpha}
\end{align}
The choice of action schema assigned to a slot restricts the choices of (ground) precondition and
effect formulas.
% For a given schema $\alpha$, let $\pre{\alpha} \equiv \land_{k=1}^{card(\pre{\alpha})} f_k(\bar{x}_k) = y_k$ and $\eff{\alpha} \equiv \land_{l=1}^{card(\eff{\alpha})} f_l(\bar{x}_l) = y_l$.
As introduced above, the input and output pins of a slot $i$ are
the vectors of decision variables that select function symbols, arguments and values
$in_{ik}$ $:=$ $(g_{ik}, x_{ik1}, \ldots, x_{ik\bar{N}_f}, y_{ik})$
and $out_{il}$ $:=$ $(h_{il}, x_{il1}, \ldots, x_{il\bar{N}_{f}}, y_{il})$.
Consistency of schema, preconditions and effects assigned to slot $i$ is enforced by
\begin{subequations}
	\label{input_and_output_pins}
	\begin{align}
		\label{pins:g_ik}
		(act_i = \alpha) &\rightarrow (g_{ik} = f_k) \land \mathrm{bind}(f_k, \bar{x}_{ik}, y_{ik}, \overline{arg}_i) \\
		\label{pins:h_il}
		(act_i = \alpha) &\rightarrow (h_{il} = f_l) \land \mathrm{bind}(f_l, \bar{x}_{il}, y_{il}, \overline{arg}_i)
	\end{align}
\end{subequations}
for $k=1, \ldots,  card(\pre{\alpha})$ and $l=1, \ldots, card(\eff{\alpha})$. $\overline{arg}_i$ is
the vector of argument variables for slot $i$. The predicate
$\mathrm{bind}$ in~\eqref{pins:g_ik} and~\eqref{pins:h_il} ensures that subterms $\bar{x}$ and $y$ of
equality atoms in preconditions and effects are consistent with the definition of schema $\alpha$, expanding into the following
\begin{align}
	\label{pins:bind_constraint}
	\bigwedge_{j=1}^{d_{f}} \bigg( \bigvee_{j_2=1}^{K_{\alpha}} (x_{j} = {arg}_{ij_2}) \bigg)
	\land \bigvee_{j_2=1}^{K_{\alpha}} (y = {arg}_{ij_2})
\end{align}
\noindent The dependencies induced by constraints~\eqref{argument_binding}--\eqref{pins:bind_constraint} are
depicted in Figure~\ref{fig:slot_constraint_graph},

States are represented implicitly in our model, and in order to ensure that no change on the
initial state $s_0$ is allowed without an event in the plan that explains any changes we
%State $s_j$, $0 < j < N_\pi$ is recovered from assignments
%to pins $out_{il}$, each of these indicating a change in the valuation of terms $h_{il}(\bar{x}_{il})$ due
%the application of some action effect~\eqref{satisfiability}.
rely on the notion of causal consistency or \emph{persistence}
\begin{definition}
	\label{def:persistence}
	Let $f(\bar{x})$ be a functional term $f \in \Phi$ and $y$ a valid value in $\codomain{f}$. We say that
	the \emph{truth} of atom $f(\bar{x}) = y$ \emph{persists} between slots $i$ and $j$, $0 \leq i < j \leq N_\pi$,
	if (1) $f : \bar{x} \mapsto_{s_i} y$, and (2) for every slot $j'$, $i < j' <j$, there is
	no pin $out_{j'l} = (h_{j'l}, \bar{x}_{j'l}, y_{j'l})$ such that $h_{j'l} = f$, $\bar{x}_{j'l} = \bar{x}$, and $y_{j'l} \neq y$.
\end{definition}
This ensures that states $s_j$ are given either by 1) function and value assignments in the initial state $s_0$ that have \emph{persisted}
through (ground) actions $a_i$ encoded in slots $i$, $0$ $<$ $i$ $<$ $j$, or 2) an assignment made by some action $a_i$ encoded in
some slot $i$, $0$ $<$ $i$ $<$ $j$, that has persisted through the actions $a_{j'}$ encoded by slots $j'$, $i$ $<$ $j'$ $<$ $j$. For example, in \emph{visitall}, an atom $at()=c_1$ in the intial state, where $c_1$ is the initial position of the agent, is said to have \emph{persisted} until slot $3$ iff the atom \emph{holds} at slot $1$ and $2$.

The many
possible cause-and-effect relations between output and input pins that may justify the causal consistency of a plan are modelled
via so-called (causal) \emph{support} variables, $spt_{jk}$ $\in$ $\big( [0, j-1] \times [1, K_{\mathrm{eff}}] \big) \cup \{ \mathrm{null} \}$, for each slot
$0$ $<$ $j$ $\leq$ $N_\pi$ and input pin $1$ $\leq$ $k$ $\leq$ $K_{\mathrm{pre}}$. The domain of these variables is the set of two-dimensional vectors whose
first element is the index of the slot $i$ and the second the index $l$ of the output pin supporting $in_{jk}$, plus a dummy vector $\mathrm{null}$
that indicates that input pin $in_{jk}$ does not have a causal support assigned.
The following constraints enforce that every active pin has a matching supporting one
\begin{subequations}
	\label{persistence}
	\begin{align}
		spt_{jk} = \mathrm{null} &\rightarrow \, \neg active_{jk} \label{persist:1} \\
		spt_{jk} = (i,l) &\rightarrow \, out_{il}=in_{jk} \label{persist:2} \\
		spt_{jk} = (i,l) &\rightarrow \, \bigwedge_{i < j' < j} \mathrm{persists}(out_{j'l}, in_{jk}) \label{persist:3}
	\end{align}
\end{subequations}
\noindent
for each $1$ $\leq$ $j$ $\leq$ $N_{\pi}$ and $0$ $\leq$ $i$ $<$ $j$. Constraint~\eqref{persist:1} excuses input
pins that are inactive from having a causally supporting output pin. Constraint~\eqref{persist:2}
ensures that the values for functions, domain and valuation set by pins $in_{jk}$ and $out_{il}$
are matching. Constraint~\eqref{persist:3} encodes the requirement of causal supports to not
be interfered by any ground action set for intermediate slots $i < j' < j$.
$\mathrm{persists}(out_{il}, in_{jk})$ in constraint~\eqref{persist:3} expands into the formula
\begin{align}
	\label{eq:persistence}
	(h_{il} \neq g_{jk}) \lor (\bar{x}_{il} \neq \bar{x}_{jk}) \lor (y_{il} = y_{jk})
\end{align}
\noindent In the specific case of \emph{visitall}, the constraints~\eqref{persistence} and \eqref{eq:persistence} impose additional constraints on the input and output pin variables than the ones depicted in Figure~\ref{fig:slot_constraint_graph}(b), thus, restricting the choices of $act_i$ and $\overline{arg}_i$ as well. For example, if $spt_{31} = (1,3)$, then the constraint \eqref{persist:2} requires that the assignment to $out_{13}:=(at, y_{13})$ matches that of $in_{31}:=(at, y_{31})$, i.e. $y_{13}=y_{31}$, and the constraints~\eqref{persist:3} and \eqref{eq:persistence} ensure that the assignment to $out_{13}$ \emph{persists} through slot $2$, i.e. the (ground) action $a_2$ does not affect the interpretation of $at$. It is easy to see that when $spt_{31} = (1,3)$, the choice of $y_{13}$ in turn affects $arg_{31}$ since $arg_{31}$ holds an equality relation with $y_{31}$.

\noindent Additionally, whenever we assign action schemas $\alpha$ to slots $i$ we mark input
and output pins as being active
\begin{subequations}
	\label{pin_activity}
	\begin{align}
%		act_i = \alpha \rightarrow &\, \bigwedge_{1 \leq k \leq card(\pre{\alpha})} active_{ik} \\
%		act_i = \alpha \rightarrow &\, \bigwedge_{card(\pre{\alpha}) < k \leq K_{\mathrm{pre}}} \neg active_{ik} \\
%		act_i = \alpha \rightarrow &\, \bigwedge_{1 \leq l \leq card(\eff{\alpha})} active_{il} \\
%		act_i = \alpha \rightarrow &\, \bigwedge_{card(\eff{\alpha}) < l \leq K_{\mathrm{eff}}} \neg active_{il}
		&act_i = \alpha \rightarrow active_{ik},\, act_i = \alpha \rightarrow \neg active_{ik'} \\
		&act_i = \alpha \rightarrow active_{il},\, act_i = \alpha \rightarrow \neg active_{il'}
	\end{align}
\end{subequations}
with indices ranging as follows: $1$ $\leq$ $k$ $\leq$ $card(\pre{\alpha})$, $card(\pre{\alpha})$ $<$ $k'$ $\leq$ $K_{\mathrm{pre}}$,
$1$ $\leq$ $l$ $\leq$ $card(\eff{\alpha})$, $card(\eff{\alpha})$ $<$ $l'$ $\leq$ $K_{\mathrm{eff}}$.
Initial and goal states are accounted for in the following way. To model the initial state, we define
slot $0$ to consist exclusively of a set of (output) pins $out_{0l}$ where $l$ ranges over the indexing
of the set
\begin{align}
	\label{init_state_extension}
	{\cal F} := \bigcup_{f \in \Phi} \domain{f}
\end{align}
\noindent and each pin is set as per constraints, for each $l \in [{\cal F}]$
\begin{align}
	\label{init_assignemnts}
	out_{0l} = (f_l, \bar{x}_l, y)
\end{align}
\noindent and $y \in \codomain{f}$ such that  $f_l : \bar{x}_l \mapsto_{s_0} y$.
Goal states are modelled too by having the slot $N_\pi$ to have special structure,
in this case by only having input pins $in_{N_\pi k}$ with $k$ ranging over the indexing
of the set of equality atoms $\varphi_k \equiv f_k(\bar{x}_k) = y_k$ in the goal formula
\begin{align}
	\label{goal_constraints}
	\bigwedge_{\varphi_k \in Goal} in_{N_\pi k} = (f_k, \bar{x}_k, y_k) \land active_{N_\pi k}
\end{align}

\subsection{Analysis: Systematicity and Complexity}
\label{section:analysis}

We first establish as fact that our encoding is \emph{systematic}
~\cite{mcallester:nonlin}.

\begin{theorem}[Systematicity]
	Let $T_{P,N_{\pi}}$ $=$ $({\cal X}$, ${\cal C})$ be the CSP given by the decision variables ${\cal X}$ in Table~\ref{fig:notation_variables}
	and set of constraints ${\cal C}$~\eqref{argument_binding}--\eqref{goal_constraints}, for some suitable choice of $N_{\pi}$.
	There exists an assignment $\xi$ onto variables ${\cal X}$ that satisfies every constraint in ${\cal C}$ if an only if there exists
	a feasible solution $\sigma$ to the PPI $P$, whose actions are given by slots, argument, input and output pin variables values.
\end{theorem}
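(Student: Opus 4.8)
The plan is to prove the two directions of the biconditional separately, exhibiting an explicit, value-by-value correspondence between satisfying assignments $\xi$ and feasible trajectories $\sigma$. Throughout I fix the reading of the encoding in which slot $0$ records the initial state through its output pins \eqref{init_assignemnts}, slot $N_\pi$ records the goal through its input pins \eqref{goal_constraints}, and the remaining slots $1 \le i < N_\pi$ each carry one ground action; the length of the extracted plan is then tied to $N_\pi$ up to this boundary convention. The conceptual device both directions turn on is the \emph{implicit state}: given an assignment, I define $s_i$ recursively from $s_{i-1}$ by the update that overwrites $f(\bar x)$ with $z$ exactly when slot $i$ carries an active output pin $(f,\bar x,z)$, and leaves every other functional term unchanged. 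The whole argument reduces to showing that Definition~\ref{def:persistence} and the support constraints \eqref{persistence} make ``atom holds in $s_{i-1}$'' and ``the corresponding active input pin has a causally persisting support'' interchangeable.

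For the direction from a feasible $\sigma = s_0, a_1, \ldots, a_k, s_k$ to an assignment, I would process the plan slot by slot. Since $Act$ is systematic, each transition $s_{i-1} \to_{a_i} s_i$ is witnessed by a schema $\alpha$ and a grounding $\bar v$ satisfying \eqref{systematicity}; I set $act_i := \alpha$ and $\overline{arg}_i := \bar v$, populate the input pins from the equality atoms of $\pre{\alpha}/\bar v$ and the output pins from those of $\eff{\alpha}/\bar v$, and set the activity flags as in \eqref{pin_activity}, sending unused argument positions to $\mathrm{null}$. The only genuinely non-mechanical choice is the support variables. For each active input pin of slot $j$ encoding an atom $f(\bar x)=z$, that atom holds in $s_{j-1}$, so it has a \emph{cause}: either it is already true in $s_0$, or some action at a slot $i<j$ writes it. I would assign $spt_{jk}$ to the \emph{latest} such cause (its slot index together with the matching output-pin index), defaulting to slot $0$. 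Choosing the latest cause is exactly what discharges \eqref{persist:3}: no slot strictly between $i$ and $j$ can rewrite $f(\bar x)$ to a different value, for such a slot would itself be a later cause. Constraints \eqref{persist:2} and \eqref{persist:1} then hold by construction, and the frame condition \eqref{syst:frame} ensures the states read off from $\sigma$ agree with the implicit states, so every remaining constraint is satisfied.

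For the converse, given a satisfying $\xi$ I extract the ground action $a_i := act_i(\overline{arg}_i)$ at each slot and build $s_1, \ldots$ by the implicit-state recursion above; the effect atoms of $a_i$ hold in $s_i$ by the very definition of that recursion, and functions stay single-valued because the effect formula \eqref{effects} is a consistent conjunction. The core lemma is that every active supported input pin is truthful: if $in_{jk}=(f,\bar x,z)$ is active with $spt_{jk}=(i,l)$, then \eqref{persist:2} forces $out_{il}=(f,\bar x,z)$, so $s_i(f(\bar x))=z$, and \eqref{persist:3} together with \eqref{eq:persistence} forbids any intermediate slot $i<j'<j$ from overwriting $f(\bar x)$, whence $s_{j-1}(f(\bar x))=z$. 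Applying this lemma to slots $1 \le i < N_\pi$ shows each $\pre{\alpha}/\bar v$ holds in $s_{i-1}$, so each $a_i$ is applicable and $(s_{i-1},a_i,s_i)\in \to$ under the FSTRIPS semantics; applying it to the goal slot $N_\pi$, whose active pins must by \eqref{persist:1} carry non-$\mathrm{null}$ supports, shows the goal formula \eqref{goal_constraints} holds in $s_{N_\pi-1}$, so $\sigma$ is a valid plan.

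I expect the main obstacle to be making the frame reasoning watertight in both directions at once, since the encoding deliberately omits explanatory frame axioms. The delicate half is the forward ``latest cause'' argument: I must show that such a cause always exists — it does, because an atom true in $s_{j-1}$ is either inherited from $s_0$ or produced by some write, there being no other way for the implicit state to acquire a value — and that selecting the latest one always discharges \eqref{persist:3}. A secondary point needing care is the directionality of systematicity: \eqref{systematicity} supplies the completeness used in the plan-to-assignment direction, whereas the assignment-to-plan direction relies on the semantic fact that a ground action whose precondition holds induces a genuine transition of $\to$, a reliance I would state explicitly. Finally I would pin down the bookkeeping around slots $0$ and $N_\pi$ and the exact relationship between $N_\pi$ and $\length{\sigma}$, which is routine but must be kept consistent for the biconditional to read cleanly.
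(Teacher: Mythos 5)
Your proposal is correct, and it follows the same overall route as the paper's proof: a two-way correspondence between satisfying assignments and trajectories, with slot~$0$ fixing the initial state, slot~$N_\pi$ fixing the goal, and each intermediate slot carrying one ground action. The difference is one of completeness rather than strategy. The paper's own proof is a short sketch: for the assignment-to-plan direction it asserts that the pin constraints~\eqref{input_and_output_pins} ``implicitly define'' transition pairs $(s_{i-1},s_i)\in\to_{act_i}$ and that~\eqref{init_assignemnts} and~\eqref{goal_constraints} pin down the endpoints; for the plan-to-assignment direction it states that the schema, argument and pin assignments ``follow directly'' from the transition sets. It never says how the support variables $spt_{jk}$ are to be assigned, why the resulting assignment satisfies the persistence constraints~\eqref{persist:3}, or why an active supported input pin's atom actually holds in the predecessor state when states are not decision variables. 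These are precisely the pieces your proposal supplies: the latest-cause rule for choosing $spt_{jk}$, which is what discharges~\eqref{persist:3} (any intermediate overwrite would itself be a later cause, contradicting maximality), and the truthfulness lemma for the converse, which turns~\eqref{persist:2}--\eqref{persist:3} together with~\eqref{eq:persistence} into the statement that supported pins read the implicit state correctly; your implicit-state recursion likewise makes precise the state reconstruction that the paper leaves tacit. One small caution: as literally written, \eqref{persist:3} only mentions the output pin of index $l$ at each intermediate slot, whereas both Definition~\ref{def:persistence} and your truthfulness lemma require quantification over \emph{all} output pins of intermediate slots; your reading is the intended one, but it is worth flagging that the constraint must be interpreted (or corrected) that way for the argument to go through.
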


\begin{proof}
	See \ref{theorem:systematicity}.
\end{proof}

Giving bounds on the number of variables generated for a CP model is not as informative as doing so for CNF formulas due
to the advanced \emph{pre-solving techniques} that state-of-the-art CP solvers employ~\cite{software:ortools}, that may
introduce auxiliary Boolean variables and eliminate variables whose values can be determined without searching. Assuming that
no such transformations are applied to the CSP, the set of $spt_{jk}$ variables and their domain constraints, which
are required to be encoded explicitly by LCG solvers~\cite{ohrimenko:lcg}, is the largest and is $O(N_{\pi}^2 K_{max})$ where
$K_{max}$ $=$ $\max$ $\{$ $K_{\mathrm{pre}}$, $K_{\mathrm{eff}}$ $\}$. In the IPC benchmarks, this results often in just
a quadratic rate of growth as $K_{max}$ is usually much smaller than $N_\pi$ for optimal plans. Yet, as we will see in
our Evaluation, this is not always the case, and we get cubic rates.

\section{Programming the Model with \textsc{CpSat}}
\label{section:cp_implementation}
To implement the CP model introduced in the previous section we have used the LCG solver
bundled in Google's \textsc{Or-Tools} package, \textsc{CpSat}~\cite{software:ortools}. At the time
of writing this, \textsc{CpSat} is the state-of-the-art LCG solver as adjudicated by the
latest results of the \textsc{Minizinc} challenge~\cite{stuckey:challenge}. A key feature
of \textsc{CpSat} we rely on is the extensive support for different variants of
so-called \element{\cdot} constraints~\cite{book:hooker:2012}.
These constraints implement \emph{variable indexing}, a key modeling feature in CP, and
correspond with the statement $\bar{v}_{x} = z$, that reads as
``the element at position $x$ of vector $\bar{v}$ must be equal to $z$''. The power
of these constraints lies in the possibility of elements of $\bar{v}=(v_1,\ldots,v_m)$, $x$ and
$z$ being \emph{all} decision variables.
We write \element{\cdot} constraints using Hooker's~\cite{book:hooker:2012} notation
\begin{subequations}
\label{element:definition}
\begin{align}
	&\element{x, z \mid \bar{v}} \label{element:1Dform}\\
	&\element{(x_1, x_2), z \mid V} \label{element:2Dform}
\end{align}
\end{subequations}
\noindent where \eqref{element:1Dform} implements the statement $\bar{v}_x = z$, and
\eqref{element:2Dform} implements $V(x_1,x_2) = z$ that reads as ``the element of matrix $V$
at coordinates $x_1,x_2$ must be equal to $z$''. $x$, $x_1$ and $x_2$ are thus \emph{index}
variables that locate one decision variable within a collection.

In our implementation, the action assigned to slot $i$, $act_i$ is an index variable, which
depends on the data assigned to input and output pins as per constraints ~\eqref{input_and_output_pins} and \eqref{pins:bind_constraint}.
Both of these constraints can be encoded compactly using \eqref{element:1Dform}
\begin{subequations}
	\label{element:action_binding}
	\begin{align}
		&\mathrm{element}(act_i, g_{ik} \mid (f_1, f_2,\ldots, f_j, \ldots,f_{|Act|})) \label{element:action_input_pin_function} \\
		&\mathrm{element}(act_i, x_{ikl} \mid (\nu_1, \nu_2,\ldots, \nu_j\ldots,\nu_{|Act|})) \label{element:action_input_pin_args} \\
		&\mathrm{element}(act_i, y_{ik} \mid (\mu_1, \mu_2,\ldots,\mu_j \ldots,\mu_{|Act|})) \label{element:action_input_pin_value}
	\end{align}
\end{subequations}

where $f_j$ is a function symbol, $\nu_j$ and $\mu_j$ are the argument variables of slot $i$ or constants that are consistent with the definition of action schema when $act_i=j$. $f_j$ is assigned a special function when the input pin is \emph{inactive} in the action schema, thus accounting for literals $\neg \textit{active}_{ik}$.

\noindent To give the readers a better intuition of the element constraints above, we revisit the example of \emph{visitall}. Visitall only has one action schema \emph{move}, and hence, the element constraints are uncomplicated. Consider the input pin $in_{i1}$ in the Figure~\eqref{fig:slot_constraint_graph}(b), the constraint \eqref{element:action_input_pin_function} for the pin is written as $\mathrm{element}(act_i, g_{i1} \mid (at))$, the constraint \ref{element:action_input_pin_args} as $\mathrm{element}(act_i, x_{i1l} \mid (\Box))$, and lastly the constraint \eqref{element:action_input_pin_value} as $\mathrm{element}(act_i, y_{i1} \mid (arg_{i1}))$, where $\Box$ is a \emph{dummy} symbol which accounts
for \emph{inactive} variables
% for the slot $i$ being \emph{disabled}, i.e. it is unused in the solution plan.
These constraints then ensure that the variables of the slot $i$ are internally consistent for all possible assignments to $act_i$.

% where the vector $(f_0, f_1, f_2,\ldots,f_m)$, $m=\vert \Phi \vert$ includes special function $f_0$ to represent the data of \textit{inactive} input pins, thus
% accounting for literals $\neg \textit{active}_{ik}$. We encode the dependency of $act_i$ on output pin data with analogous constraints.

The \emph{support} variables $spt_{jk}$ are index variables with two elements, $(i, l)$, the first being the index of a slot
and the second the index of an output pin. As discussed in the previous section, variables $spt_{jk}$ depend on the data of
input pin $in_{jk}$. Output pins $out_{jl}$ and constraints~\eqref{persist:1} and~\eqref{persist:2} can be accounted for
using \element{\cdot} in its matrix form~\eqref{element:2Dform}
\begin{align}
	\label{element:support_binding}
	\mathrm{element}((i, l), in_{jk} \mid H)
\end{align}
where $H$ is the data of all output pins in a $2$-dimensional grid. Constraint \eqref{element:support_binding}
thus requires that the data of the output pin at the row $i$ (slot $i$) and column $l$ ($l$-th pin) of $H$ is equal to
that in $in_{jk}$. The matrix $H$ includes a specially defined element, containing the data used to represent \emph{inactive} pins, whose index is assigned to $spt_{jk}$ when the $k$-th input pin at slot $j$ is \emph{inactive}, thereby encoding constraint \eqref{persist:1}.

We exploit other modeling features offered by \textsc{CpSat} to implement the $\mathrm{persist}(\cdot)$ predicate
given in Equation~\eqref{eq:persistence}
\begin{subequations}
	\label{persists_formula_implementation}
	\begin{align}
		u \lor  v_1 &\lor \ldots \lor v_o \lor \ldots \lor v_{d_f} \lor w \label{boolor:persists_formula} \\
		u \rightarrow &(h_{il} \neq g_{jk}) \label{notequal:input_output_function} \\
		{v_o}  \rightarrow &(x_{ilo} \neq x_{jko}) \label{notequal:input_output_arguments} \\
		w \rightarrow &(y_{il} = y_{jk}) \label{equality:input_output_value}
		\end{align}
\end{subequations}
where $u, v_o$ and $w$ are auxiliary Boolean variables created for each $(out_{j'l}, in_{jk})$ pair, and
$1$ $\leq$ $o$ $\leq$ $K_f$.
\textsc{CpSat} does not represent explicitly constraints \eqref{notequal:input_output_function},
\eqref{notequal:input_output_arguments} and \eqref{equality:input_output_value}. Instead, it
collects them into a \emph{precedences propagator} as \emph{inequalities} between integer variables.
The \emph{precedences propagator} uses the Bellman-Ford algorithm to detect negative cycles in the
constraint graph of inequalities, and propagates bounds on the integer variables~\cite{nieuwenhuis:diff_logic}.
Still, $O(N_{\pi}^2 K_{\mathrm{pre}} K_{\mathrm{eff}} K_f)$ variables are
generated in the worst-case, e.g. $O(n^5)$ if all these quantities belong to the same order of magnitude.
In most of the benchmarks we use to test planners using these encodings, the number of preconditions,
effects and arity of functions are much smaller than $N_\pi$, thus generating $O(n^2)$ variables.

\textbf{Encoding \emph{static} relations with \emph{table constraints}}. Some PPIs contain \emph{domain specific} relations that are not affected by the action schemas. For example, in \emph{visitall} the relation \emph{connected} is \emph{static} and its interpretation is fixed by the specification of the initial state.
We encode the dependency of the input pin variables on these static relations using \emph{table constraints} which allow us to specify a set of \emph{allowed} (or \emph{forbidden}) assignments to a tuple of variables, i.e. for a static relation $R$, we encode the constraint as $table(in_{ik}, R)$, where the pin $in_{ik}$ is specially designed to represent a tuple in $R$. This is more efficient than using the element constraints~\eqref{element:support_binding} since \textsc{CpSat} translates them into a concise CNF formulation.

\textbf{Propagator for Required Persistence}. We recall constraint~\eqref{persist:3} that enforces
the requirement that whenever an output pin $out_{il}$ is to provide causal explanation for an input pin $in_{jk}$, the atom described by the
former is not interfered with by any other output pins in intermediate slots. Clearly, the number
of variables and clauses~\eqref{boolor:persists_formula} is proportional to $j - i$, bringing the potential
number of variables generated to be $O(n^6)$. While such a rate of growth seems unsustainable for non-trivial
instances, the empirical results we obtain clearly show that this worst-case does not always follow for many
domains widely used to evaluate planning algorithms.

To avoid this blow up, yet keep the strong inference offered by the system of
constraints~\eqref{persists_formula_implementation}, we introduce specific propagator that
interfaces with \textsc{CpSat} \emph{precedences} propagator and checks whether
constraint~\eqref{persist:3} is satisfied. The propagator activates whenever an assignment in the CDCL
search fully decides the variables in input pin $in_{jk}$. We then check, for every slot $j'$, $0$ $<$ $j'$ $<$ $j$,
whether the current assignment has fully decided some output pin $out_{j'l}$, and proceed to evaluate
the $\mathrm{persistence}$ predicate in Equation~\eqref{persistence} on the assignment. If the former
evaluates to false, we have a conflict between the assignment and constraint~\eqref{persist:3}.
We then generate the following blocking clause or \emph{reason} to explain it
\begin{align}
	\label{required_persistence_reason}
	\neg(\varphi_{out_{j'l}} \land \varphi_{in_{jk}}) \lor \big( LB(spt_{jk_1}) > j' )
\end{align}
where $\varphi$ formulae are the conjunction of equality atoms that bind variables in pins to the values
in the current assignment. $spt_{jk_1}$ is the first element (variable) in $spt_{jk}$ and $LB$ is
a function provided by \textsc{CpSat} that allows to access the lower bound of the domain of a variable
in constant time.

\textbf{Searching for plans}. Our algorithm for planning as satisfiability uses a
strategy to find plans that is analogous to the notion of \emph{lookaheads} in Approximate
Dynamic Programming. Like in the classic algorithm described earlier in the paper,
we generate a sequence of CSPs $T_{P,k_1}$, $T_{P,k_2}$, $\ldots$, $T_{P,k_i}$, $\ldots$
with $k_0=0$ and $k_i - k_{i-1} \geq 1$ for $i > 0$.

To each $T_{P,k_i}$ we impose the
Pseudo-Boolean objective
\begin{align}
	\label{pbo_objective}
	\sum_{j=1}^{k_i} enabled_j
\end{align}
where $enabled_j$ are auxiliary Boolean variables which we use to ``switch off'' slots
via constraints $\neg enabled_j$ $\rightarrow$ $act_j > \vert Act \vert$. If \textsc{CpSat}
proves that the  resulting \emph{optimization} problem has \emph{finite} optimal value
$z$ then we know that $c^{*} = z$ and the search terminates as we have found an optimal
plan.
If \textsc{CpSat} finds an upper bound $z$, that is, a sequence of feasible solutions
are found but it is not possible to prove optimality of the last one within the time limit set,
then we know that $c^{*} \leq z$. If \textsc{CpSat} proves the tightest upper bound
to be $\infty$ (e.g. the problem is unsatisfiable) then we have proved a
\emph{deductive lower bound}~\cite{geffner:planning_graphs}, as we know that $c^{*} > k_i$.
In this last case, we repeat the process above with $TP_{k_{i+1}}$ until a solution is
found, or the allowed time to search for plans is exhausted.
% We rely on the \emph{assumption interface} \textsc{CpSat} feature to propagate deductive
% lower bounds into the next iteration, by setting to true variables $enabled_j$, $j=1,\ldots,k_i$.

\section{Functional transformation of a PDDL task}
\label{Functional transformation}

Benchmarks in planning are not expressed in FSTRIPS directly but in PDDL~\cite{haslum2019introduction}, a defacto abstract representation of a PPI used by the research community. PDDL is defined by  the tuple $P = \left< {U, T, {\cal P}, Act, s_0, \gamma}  \right>$, where $U$ is a set of \emph{objects}, $T \in 2^{U}$ is a collection of types, ${\cal P}$ is a set of \emph{predicates}, $Act$ is a set of \emph{action schemas}, $s_0$ is the initial state, and $\gamma$ is \emph{goal formula}. For a given predicate $\mathsf{P} \in {\cal P}$ of arity $d_\mathsf{P}$, there are two associated literals, the positive atom $\mathsf{P}(\bar{x})$ and negative atom $\neg \mathsf{P}(\bar{x})$, where $\bar{x}$ is tuple of variables $(x_1, x_2,\ldots,x_{d_\mathsf{P}})$, the domain of $x_i$ corresponds to a type $t_i \in T$. We denote $\max_{\mathsf{P}\in {\cal P}}d_\mathsf{P}$ as $K_\mathsf{P}$. A \emph{simple} reformulation of PDDL planning task into FSTRIPS representation is to treat each predicate $\mathsf{P} \in {\cal P}$ as a Boolean function or a \emph{mapping}, $f_\mathsf{P}: \domain{{f_\mathsf{P}}} \mapsto \codomain{{f_\mathsf{P}}}$, $\domain{{f_\mathsf{P}}} \subseteq t_1 \times t_2 \times,\ldots, \times t_{d_\mathsf{P}}$, $\codomain{{f_\mathsf{P}}} = \{\top, \bot\}$. Thus, $(f_\mathsf{P}(\bar{x})=\top)$ denotes $\mathsf{P}(\bar{x})$ and $(f_\mathsf{P}(\bar{x}) = \bot)$ denotes $\neg \mathsf{P}(\bar{x})$. Mappings that go beyond Boolean functions can yield a more compact FSTRIPS representation for CP. Hence, we present in this section a novel method to derive a more concise functional transformation of predicates ${\cal P}$.

Any function $f : \domain{f} \mapsto \codomain{f}$ has an associated binary relation, a \emph{mapping}, ${\cal R}_f:= \{(x, y) \mid f(x)=y,\ x \in \domain{x}, y \in \domain{y} \}$. A predicate $\mathsf{P} \in {\cal P}$  of arity $2$ also has an associated binary relation, ${\cal R}_\mathsf{P} := \{(x, y) \mid \mathsf{P}(x, y),\ x \in \domain{x}, y \in \domain{y}\}$. If the relation ${\cal R}_\mathsf{P}$ is a mapping, then we can create a \emph{more concise} transformation, i.e. $f_\mathsf{P}: \domain{x} \mapsto \domain{y}$, instead of $f_\mathsf{P}: \domain{x} \times \domain{y} \mapsto \{ \top, \bot\}$. Moreover, $0$-ary, $1$-ary and $n$-ary predicates can all be mapped into the binary case without loss of generality, i.e. $\mathsf{P}()$ can be substituted by $\mathsf{P}(c_1, c_2)$, $c_1, c_2 \in \domain{c}, \domain{c} \cap U = \emptyset$, $\mathsf{P}(y)$ by $\mathsf{P}(c, y)$, and for $n\geq2$, $\mathsf{P}(x, y)$ replaces the predicate $\mathsf{P}(u_1,\ldots, u_n)$, where $x$ is a tuple in the set of combinations of parameters of length $n-1$ and $y$ is the parameter which is excluded from $x$.
For example, in the PDDL specification of \emph{visitall}, $at$ is a  $1$-ary predicate which represents the current position of the agent. We can map $at$ into the binary case by introducing a constant $A$, and then, transform it into a function as $at:\{A\} \mapsto C$ since the agent can take at most one position on the grid.
Thus, for each predicate $\mathsf{P} \in {\cal P}$, there is an associated binary relation ${\cal R}_\mathsf{P} := \{(x, y) \mid \mathsf{P}(x, y), x \in \domain{x}, y \in \domain{y}\}$. There are two \emph{necessary and sufficient} conditions for a binary relation to be a \emph{mapping}, $(1)$ it is \emph{right-unique}, and $(2)$ it is \emph{left-total}. A binary relation ${\cal R}_\mathsf{P}$ is \emph{right unique} iff $\forall\ x_1, x_2 \in \domain{x}, y_1,y_2 \in \domain{y},\ \mathsf{P}(x_1, y_1) \land \mathsf{P}(x_2, y_2) \land (x_1=x_2) \rightarrow (y_1=y_2)$. It is \emph{left-total} iff $\forall\ x \in \domain{x},\ \exists\ y \in \domain{y},\ \mathsf{P}(x, y)$. Since, the states $s \in S$ set the interpretation of predicate $\mathsf{P}$, we have to at-least prove that the \emph{right-unique} and \emph{left-total} conditions \emph{hold} in $S^R_P$, the set of states reachable from $s_0$, to make a case for the \emph{more concise} functional transformation.

%% ANU: Do we denote the interpretation of R(x, y) in state s with R^s(x, y)? - ask Miquel
\begin{theorem}
	If ${\cal R}_\mathsf{P}$ is a \textit{mapping} in all possible interpretations $s \in S^R_P$, then ${P'}$, the transformation of the problem $P$ which encodes the predicate $\mathsf{P}$ as a function $f_\mathsf{P} : \domain{x} \mapsto \domain{y}$, has the same set of reachable states as $P$, i.e. $S^R_P = S^R_{P'}$
\end{theorem}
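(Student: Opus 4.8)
The plan is to prove the equality by exhibiting a bijective correspondence $\phi$ between the states of $P$ in which ${\cal R}_\mathsf{P}$ is a mapping and the states of $P'$, and then showing that $\phi$ transports reachability in both directions (so that the stated equality $S^R_P = S^R_{P'}$ is understood under the identification induced by $\phi$). On every symbol other than $\mathsf{P}$ the two problems are identical, so $\phi$ acts as the identity there; on $\mathsf{P}$ it sends a mapping-valued interpretation $\{(x,y) : \mathsf{P}(x,y)\}$ to the function $f_\mathsf{P}$ defined by $f_\mathsf{P}(x)=y$ for the unique $y$ paired with $x$. Right-uniqueness guarantees that this $y$ is well defined and left-totality guarantees that it exists, so $\phi$ is well defined on every mapping-state; conversely every function graph is a mapping, so $\phi$ is a bijection onto the entire state space of $P'$.

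First I would dispatch the base case: since $s_0 \in S^R_P$, the hypothesis gives that ${\cal R}_\mathsf{P}$ is a mapping in $s_0$, so $\phi(s_0)$ is defined and equals the initial state $s_0'$ of $P'$ by construction of the transformation. The substance is a transition-preservation (bisimulation) claim: for any state $s$ in which ${\cal R}_\mathsf{P}$ is a mapping and any action $a$, one has $s \to_{a} s''$ in $P$ if and only if $\phi(s) \to_{a'} \phi(s'')$ in $P'$, where $a'$ is the transformed action and $s''$ is again a mapping-state. For the preconditions I would match a positive literal $\mathsf{P}(x,y)$ with the equality atom $f_\mathsf{P}(x)=y$ and a negative literal $\neg\mathsf{P}(x,y)$ with $f_\mathsf{P}(x)\neq y$; since $\phi$ preserves the truth of every atom, a precondition holds in $s$ exactly when it holds in $\phi(s)$.

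The delicate part is the effects, where the two semantics genuinely differ and where the hypothesis does the real work: a PDDL add-effect $\mathsf{P}(x,y)$ is monotone and does not erase a previously true $\mathsf{P}(x,y')$, whereas the functional update $f_\mathsf{P}(x):=y$ overwrites. Because $s''$ is reachable, the hypothesis forces ${\cal R}_\mathsf{P}$ to be a mapping in $s''$; right-uniqueness then rules out any surviving $\mathsf{P}(x,y')$ with $y'\neq y$, so on the reachable fragment the add-effect behaves exactly as an overwrite, matching $f_\mathsf{P}(x):=y$. Symmetrically, left-totality of $s''$ forbids an action from deleting $\mathsf{P}(x,y)$ without supplying a replacement value, so every effect touching a first argument $x$ amounts to a net reassignment of $f_\mathsf{P}(x)$. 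Hence $\phi(s'')$ is precisely the result of applying the functional effects of $a'$ to $\phi(s)$. I expect this effect-reconciliation step to be the main obstacle, as it is the only point where the encodings are not literally identical and the mapping hypothesis is indispensable.

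With the bisimulation established, I would conclude by induction on trajectory length in each direction. The forward direction gives $\phi(S^R_P) \subseteq S^R_{P'}$, using that every reachable $P$-state is a mapping so that $\phi$ is defined at each step. The backward direction gives $S^R_{P'} \subseteq \phi(S^R_P)$ by lifting each $P'$-trajectory along $\phi^{-1}$ to a valid $P$-trajectory, whose intermediate states are reachable and therefore mappings, so $\phi^{-1}$ remains defined throughout. Combining the two inclusions under the identification $\phi$ yields $S^R_P = S^R_{P'}$.
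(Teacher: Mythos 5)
Your proof is correct, and it is considerably more explicit than the paper's own argument, which is essentially a single sentence: since a function is by definition right-unique and left-total, encoding $\mathsf{P}$ as $f_\mathsf{P}$ \emph{implicitly} enforces exactly the conditions that the hypothesis guarantees hold on all reachable states, and therefore the reachable state space is unchanged. The paper never constructs a correspondence between states, never discusses transitions, and in particular never addresses the one place where the two semantics genuinely diverge --- the monotone PDDL add-effect versus the overwriting functional assignment --- which you correctly single out as the crux and resolve via the reachability hypothesis (any surviving $\mathsf{P}(x,y')$ with $y' \neq y$ after an add of $\mathsf{P}(x,y)$ would violate right-uniqueness in a reachable state, so on the reachable fragment an add is a net overwrite). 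Your bijection-plus-bisimulation argument, with the base case at $s_0$ and induction on trajectory length in both directions, is what a complete proof of $S^R_P = S^R_{P'}$ actually requires; the paper's text is best read as a proof sketch appealing to the same underlying observation (functions are precisely the right-unique, left-total relations) while leaving the transition-preservation and induction steps implicit. One small point of contact worth noting: for deletes of $\mathsf{P}(x,y)$ with no replacement value, the paper's construction extends the codomain with a dummy symbol $\Box$ (so that left-totality is ``trivial to satisfy''), whereas you argue such deletes cannot produce reachable states under the hypothesis; both readings are consistent with the theorem as stated, but if one adopts the paper's $\Box$ convention your effect-reconciliation step would simply map such a delete to the assignment $f_\mathsf{P}(x) := \Box$ rather than forbid it.
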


\begin{proof}
	See \ref{theorem:function_transformation}
\end{proof}

A \emph{sufficient} condition for the \emph{right-unique} property to hold in all interpretations $s \in S_{P}^R$ is that the \emph{negation} of \emph{right-unique} condition is \emph{false} in $S^R_{P_r} \supseteq S_{P}^R$, where $P_r$ is a \emph{relaxation} of $P$. Thus, we can do a \emph{relaxed} reachability analysis of the formula  $\psi_\mathsf{P} := \exists x_1, x_2 , y_1,y_2,\ \mathsf{P}(x_1, y_1) \land \mathsf{P}(x_2, y_2) \land (x_1=x_2) \land (y_1 \neq y_2)$ to test whether the \emph{right-unique} condition holds for all $s \in S_{P_r}^R$, i.e. the \emph{right-unique} property \emph{holds} if $\psi_\mathsf{P}$ is unreachable in $P_r$.
The \emph{relaxation} is important since checking the reachability of the condition in $P$ is as hard as solving the problem itself.
To this end, we extend the $h^m$ heuristic~\cite{geffner2000admissible}, which is an \emph{admissible approximation} of the
optimal heuristic function $h^*$, to the first order logic existential
formula of the form $\psi := \exists \bar{x}, \psi^L \land \psi^{EQ}$, where
$\bar{x}:=(x_1, x_2,\ldots, x_n)$ is a vector of parameters, $\psi^L$ is a
conjunction of literals whose interpretation is set by the states $s \in
S^R_{\Pi}$, and $\psi^{EQ}$ is an equality-logic formula\footnote{See the
  definition of {$h^m$} over first-order logic formulae in the Appendix \ref{h_m:fol_existential}.}. We then use the extension of $h^m$ to test the reachability of the formula $\psi_\mathsf{P}$, i.e. if $h^m(\psi_\mathsf{P})=\infty$, then $\psi_\mathsf{P}$ is unreachable, and hence, the right unique condition holds in all interpretations $s \in S_P^R$ of $\mathsf{P}$.  Also, we note that, if ${\cal R}_\mathsf{P}$ is \textit{right-unique}, the \textit{left-total} property is trivial to satisfy. For each $x \in \domain{x}$, if $\not \exists y \in \domain{y},\ \mathsf{P}(x, y)$, then we map $x$ to $\Box$, a \textit{dummy} constant symbol.

$h^m(\psi_\mathsf{P})$ can be efficiently computed using dynamic programming with memoization. For a fixed value of $m$, the complexity of the above procedure is low polynomial in the number of nodes, i.e. the number of first-order logic formulas $O({|{\cal P}|^m}\cdot 2^{K_{\cal P}})$. An important property of the procedure to compute $h^m$ is that it is \emph{entirely lifted}, i.e. no \emph{ground} atom would occur in the formulas obtained through regression if no action schema has \emph{ground} atoms. This is specially useful in \emph{hard-to-ground}(HTG) domains where the size of \emph{ground} theory renders the \emph{translation} methods \cite{helmert:fd} used by most planners intractable.

\section{Evaluation}
\label{evaluation}
Our experiments consist in running a given planner on a PPI, ensuring that the solver process runs on a
single CPU core (Intel Xeon running at 2GHz). We impose resource usage limits both on runtime ($1800$ s) and memory
($8$ GBytes).  We used~\cite{seipp:lab} \emph{Downward Lab's} module to manage the parallel
execution of the experiments.

% Since the search strategy we use is an anytime algorithm it can be used for either \emph{optimal} or \emph{satisficing} planning.
% Therefore we compare the performance of \textsc{CpSat} solving our model, with and without propagators for

We compare the performance of \textsc{CpSat} solving our model, with and without propagators for
$\mathrm{persistence}$ constraints, with that of notable optimal and satisficing domain-independent
planners. The former include, in no particular order, \emph{lmcut} \cite{helmert:lmcut:2009},
\emph{symbolic-bidirectional(sbd)} \cite{torralba:symba}, the baseline at the optimal track of  the
International Planning Competition (IPC) 2018,  \emph{cpddl} \cite{daniel:cpddl}, a
very efficient implementation of symbolic dynamic programming and many other pre-solving
techniques that analyse the structure of actions in the instance, \emph{delfi1}, a \emph{portfolio}
solver \cite{katz:delfi:2018} and winner of optimal planning track in IPC $2018$, and \emph{lisat} \cite{holler_behnke:lifted_sat_encoding}, a recently proposed lifted planner which has state-of-the-art performance on \emph{hard-to-ground}~(HTG) benchmark, it solves an encoding of lifted classical planning in propositional logic using a highly efficient SAT solver $\textsc{Kissat}$~\cite{fleury:kissat} written in $C$. Satisficing
planners include the \emph{satisficing} variant of \emph{lisat}, \emph{Madagascar} \cite{rintanen:madagascar:2012}, a SAT planner which was
the runner-up of Agile track in IPC $2014$, and \emph{lifted} implementations of BFWS planners,
BFWS([$\mathrm{R_X}, h^\mathrm{add}$]) and BFWS([$\mathrm{R_X}, h^\mathrm{ff}$])
\cite{correa:lifted} which have state-of-the-art \emph{satisficing} performance on HTG
benchmark~\cite{correa:lifted,lauer:htg:2021}. We use PL$^\text{add}_{R_x}$ and PL$^\text{ff}_{R_x}$ to denote the \emph{lifted} implementations of BFWS planners, and \textit{lisat} and $\overline{\textit{lisat}}$ to denote \emph{satisficing} implementation of \emph{lisat} with and without \emph{londex} \cite{chen_huang_xing_zhang:londex} constraints. All \emph{optimal} planners were configured to minimise the plan-length.

% We evaluate all planners on the HTG benchmarks and the instances from the \emph{optimal track}
% of the IPC~\cite{icaps:competitions}, for the set of domains included in the HTG benchmark. Testing the planners on
% the HTG instances is significant as the size of $U$ is very large, and as a result explicit grounding is either infeasible
% or greatly stresses the implementation of key techniques (match trees, compilation into finite-domain representations)~\cite{helmert:fd}
% that heuristic search planners rely on to be competitive.

% Comparing the performance with IPC instances of
% the same domains allows us to control for implementation-dependant factors and also see how \textsc{CpSat} copes
% with quickly growing numbers of variables and constraints. This is so because instances in the IPC benchmark tend to require
% significantly higher numbers of plan steps for some domains (like \textit{logistics}). We need to note that we had
% to use slightly different formulations of \textit{childsnack}, the HTG formulation being more compact
% than the IPC one.

We evaluate all planners on the HTG benchmarks and the instances from the \emph{optimal track}
of the IPC~\cite{icaps:competitions}. Testing the planners on
the HTG instances is significant as the size of $U$ is very large, and as a result explicit grounding is either infeasible
or greatly stresses the implementation of key techniques (match trees, compilation into finite-domain representations)~\cite{helmert:fd}
that heuristic search planners rely on to be competitive.
Comparing the performance with IPC instances
allows us to control for implementation-dependant factors and also see how \textsc{CpSat} copes
with quickly growing numbers of variables and constraints. This is so because instances in the IPC benchmark tend to require
significantly higher number of plan steps for some domains (like \textit{logistics}).
We also tested a $3$-action lifted formulation of \emph{blocksworld} where
actions are \emph{move(x,y,z), move-to-table(x,y), and move-from-table(x,y)} which we think is
significant as it measures the sensitivity of planners to long-studied formulations of the same domain.

In addition to the IPC instances, we evaluate the planners on the multi-modal project scheduling problems (\emph{MMPSP}) from the 'j10' set in PSPLIB~\cite{mmpsp:psplib:sprecher1996}. The scheduling benchmarks are of particular interest to us since they exercise a different combinatorial structure than the IPC instances. While the IPC instances tend to have smaller and non-numeric \emph{sorts}, the scheduling instances usually have numeric duration and resources. To test the sensitivity of the planners to the distinguishing features of scheduling problems we performed an ablation study by scaling up the duration of the jobs in \emph{MMPSP} by a factor of $2$, $8$, and $16$, respectively.

% Therefore, we also benchmark the planners on the reduction of multi-modal project scheduling problems(\emph{MMPSP})~\cite{mmpsp:psplib:sprecher1996} into planning instances. Furthermore, we test the sensitivity of the planners to the size of the numeric domains by increasing the duration of jobs by a factor of $2$, $8$, and $16$, respectively.

% We evaluate two addtional domains from IPCs, where the symbolic planners have the strongest
% performance,\footnote{Personal communication with \'{A}lvaro Torralba.} \emph{agricola} and \emph{termes}.
% We did not use \emph{petri-net-analysis} because the domain theory is grounded, and it was not trivial
% to find a lifted formulation.
% We also tested a $3$-action lifted formulation of \emph{blocksworld} where
% actions are \emph{move(x,y,z), move-to-table(x,y), and move-from-table(x,y)}. We think this additional test is
% significant as it measures the sensitivity of planners to long-studied formulations of the same domain.

\textbf{\textsc{CpSat} Hyperparameters}.
\textsc{CpSat} offers great flexibility to configure what pre-solving techniques, restarting policies,
and branching heuristics are used. In our experiments, we used the default branching
heuristic settings, and chose the \emph{luby} policy for managing restarts. \textsc{CpSat} default
branching heuristic settings tries first to fix values of literals appearing in CNF clauses
(which may have been lazily generated by some propagator) selecting the former with classical activity-based
variable selection heuristics~\cite{software:ortools}. Integer variables are only considered after all Boolean
literals are fixed. We use the \emph{linear scan} algorithm of Perron et
al.~\cite{software:ortools} to optimise Eq.~\eqref{pbo_objective}.

\textbf{Implementations of Lifted Causal Encodings}.
We have tested two implementations of constraints~\eqref{persist:3}. The first
uses the formulation in Eqs.~\ref{persists_formula_implementation}. The second one
uses the \emph{persistence propagator}. We will refer to them as \CP{0} and
\CP{\text{PP}}, respectively. We set $k_1$ to goal count for the CSP $T_{P,k_1}$ in our experiments, then used a \textit{luby} sequence to set $k_i$ for $i>1$. We use the same strategy for \textit{satisficing} planning except we scale the \emph{luby} sequence by a factor of $5$ and allocate a time-budget $B_i$ for the CSP $T_{P,k_i}$. $B_i$ is set such that the planner has sufficient time to explore a \emph{sliding window} over plan-lengths, $B_i := \min\{r,~r \cdot (k_i - {lb}_{i-1})/W\}$, where $r$ is the total remaining time-budget, ${lb}_{i-1}$ is the lower bound on the plan-length returned by the CSP $T_{P,k_{i-1}}$, and $W$ is the size of the \emph{sliding window} which is a planner parameter. We set $W=50$ in our experiments.

In order to assess the effectiveness of the functional transformation, we generated the functional representation using the method described at the end of the previous section, We refer to the encoding using functional representation as CP$_\text{0}^\textit{fn}$ and CP$_\text{PP}^\textit{fn}$.

% In order to assess the effectiveness of the search strategy we
% described at the end of the previous section, we have too implemented the classic
% linear search strategy outlined in the section ``Planning as Satisfiability''. We refer
% to the fully explicit encoding using this strategy as \CPID{0}, being \CPID{MP} the version
% using the required persistence propagator.

\begin{table*}[ht]
	\centering
	{\tiny
	\scalebox{0.81}{
		\begin{tabular}
			{%
				@{}l%
				@{\extracolsep{5pt}}r%
				@{\extracolsep{5pt}}r%
        @{\extracolsep{5pt}}r%
        @{\extracolsep{5pt}}r%
        @{\extracolsep{5pt}}r%
        @{\extracolsep{5pt}}r%
        @{\extracolsep{5pt}}r%
        @{\extracolsep{5pt}}r%
				@{\extracolsep{5pt}}r|%
				@{\extracolsep{5pt}}r%
				@{\extracolsep{5pt}}r%
				@{\extracolsep{5pt}}r%
				@{\extracolsep{5pt}}r%
				@{\extracolsep{5pt}}r%
				@{\extracolsep{5pt}}r%
				@{\extracolsep{5pt}}r%
				@{}}
      \cmidrule{2-17}
			\textit{Hard-to-ground} &  \multicolumn{9}{c|}{Optimal Solution} & \multicolumn{6}{c}{Satisficing Solution} \\
			\midrule
      Domain{} &  CP$_{0}$ & CP$_{0}^\textit{fn}$ &  CP$_\text{PP}$ & CP$_\text{PP}^{\textit{fn}}$ & $\textit{lisat}$ & \textit{lmcut} & \textit{sbd} &\textit{cpddl} & \textit{delfi1} & CP$_{0}^\textit{fn}$ & CP$_\text{PP}^\textit{fn}$ & $\textit{lisat}$  & $\overline{\textit{lisat}}$  & MpC & PL$^\text{add}_{R_x}$ & PL$^\text{ff}_{R_x}$ \\
			\midrule
			blocks-3ops(40) & 40 & 40 & 40 & 40 & 40 & 0 & 0 & 0 & 0 & 40 & 40 & 40 & 12 & 0 & 10 & 10 \\
			blocks-4ops(40) & 40 & 40 & 40 & 40 & 40 & 10 & 0 & 1 & 0 & 40 & 40 & 40 & 20 & 4 & 19 & 17 \\
			childsnack(144) & 48 & 48 & 48 & 48 & 49 & 7 & 73 & 81 & 58 & 144 & 144 & 144 & 144 & 66 & 94 & 96 \\
			ged(156) & 23 & 30 & 22 & 31 & 35 & 18 & 12 & 14 & 18 & 37 & 29 & 58 & 28 & 30 & 156 & 156 \\
			ged-split(156) & 22 & 24 & 22 & 24 & 26 & 18 & 22 & 30 & 22 & 40 & 38 & 46 & 28 & 150 & 154 & 153 \\
			logistics(40) & 27 & 35 & 22 & 36 & 28 & 7 & 12 & 0 & 8 & 40 & 40 & 40 & 0 & 0 & 40 & 40 \\
			org-syn-MIT(18) & 18 & 18 & 18 & 18 & 18 & 2 & 2 & 13 & 2 & 18 & 18 & 18 & 10 & 0 & 18 & 18 \\
			org-syn-alk(18) & 18 & 18 & 18 & 18 & 18 & 18 & 18 & 18 & 18 & 18 & 18 & 18 & 18 & 0 & 18 & 18 \\
			org-syn-orig(20) & 13 & 13 & 15 & 15 & 20 & 0 & 1 & 2 & 1 & 5 & 9 & 14 & 1 & 0 & 13 & 13 \\
			pipes-tkg(50) & 15 & 15 & 15 & 17 & 20 & 8 & 12 & 13 & 10 & 23 & 25 & 10 & 23 & 10 & 45 & 46 \\
			rovers(40) & 3 & 7 & 3 & 8 & 3 & 7 & 2 & 0 & 2 & 11 & 10 & 4 & 0 & 0 & 39 & 40 \\
			visitall3D(60) & 25 & 25 & 24 & 34 & 35 & 33 & 12 & 12 & 24 & 33 & 49 & 46 & 39 & 12 & 57 & 57 \\
			visitall4D(60) & 23 & 23 & 23 & 35 & 34 & 16 & 6 & 6 & 6 & 30 & 44 & 48 & 36 & 6 & 42 & 41 \\
			visitall5D(60) & 27 & 26 & 26 & 33 & 33 & 11 & 0 & 0 & 0 & 32 & 44 & 54 & 38 & 0 & 40 & 40 \\
			\midrule
			Total(902) & 342 & 362 & 336 & 397 & 399 & 155 & 172 & 190 & 169 & 511 & 548 & 580 & 397 & 278 & 745 & 745 \\
			\bottomrule
      \\
		\end{tabular}
	}
		\caption{Coverage of different planners on \textit{hard-to-ground} benchmark domains}\label{table:htg}
	}

\end{table*}

\begin{figure*}[ht]
	\centering
	\includegraphics[width=1\linewidth]{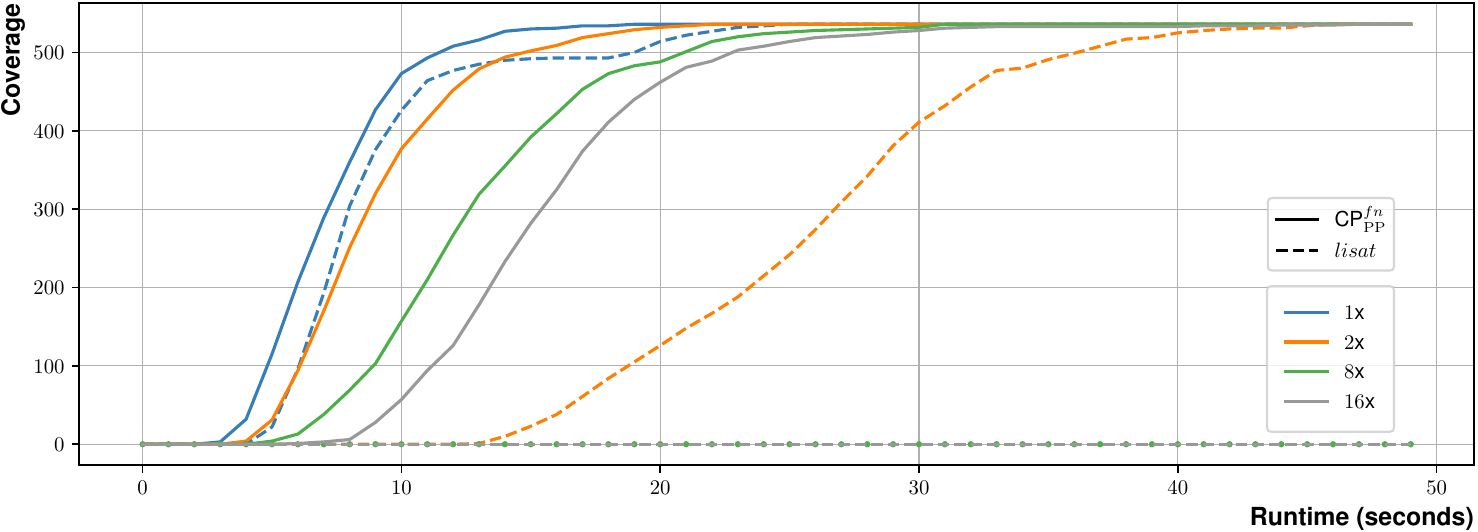}
	\caption{ Ablation study on the \emph{runtime} performance of $lisat$ and CP$^{fn}_\text{PP}$ by scaling up the duration of the jobs in \emph{MMPSP} by a factor of $2$, $8$ and $16$, respectively.}
	\label{fig:mmpsp_coverage}
\end{figure*}

% \begin{figure*}[ht]
% 	\centering
% 	\includegraphics[width=1\linewidth]{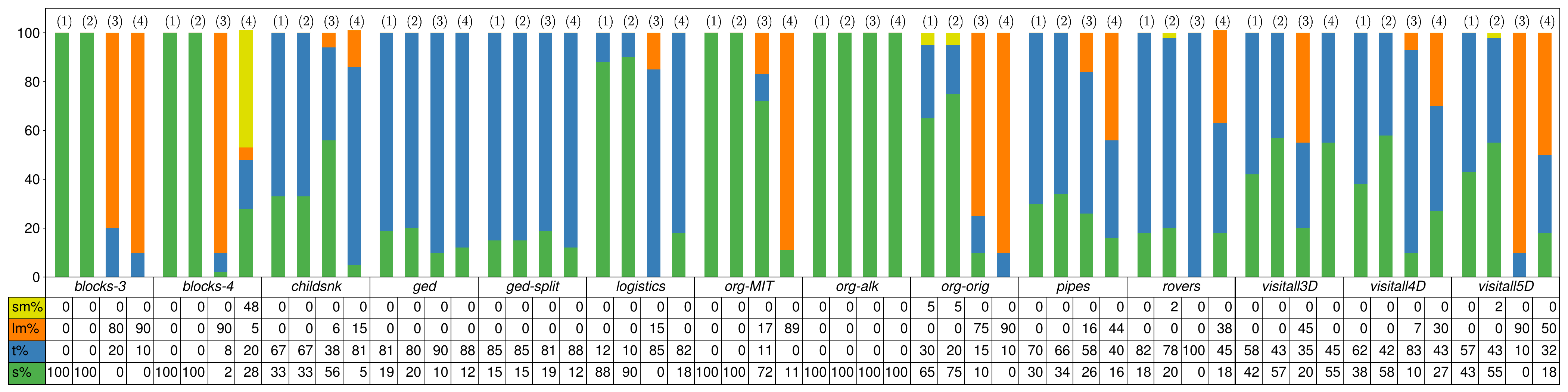}
% 	\caption{ Plot depicting performance profiles of CP$_0^\textit{fn} \mathrm{(1)}$, CP$_\text{PP}^\textit{fn}  \mathrm{(2)}$, \textit{cpddl}$\mathrm{(3)}$, and \textit{lmcut}$\mathrm{(4)}$ on the HTG benchmark set. The percentage of instances which \emph{solved $\mathrm{(s\%)}$ (optimally)}, reported \emph{memout} during \emph{loading} step $\mathrm{(lm\%)}$, reported \emph{memout} during the search $\mathrm{(sm\%)}$, and \emph{timed-out} $\mathrm{(t\%)}$ are shown. }
% 	\label{fig:performance_profiles}
% \end{figure*}

%\subsubsection{Performance over benchmarks}
\textbf{Performance over benchmarks}
We now discuss the observed performance (see
Table~\ref{table:htg})\footnote{Table~\ref{table:ipc} in the Appendix}
measured as \emph{coverage} or number of instances solved optimally (or sub-optimally) per problem domain
in each benchmark.
Also, we look closely at the sensitivity of $lisat$ and CP$_\text{PP}^{fn}$ to increasing duration of jobs in Figure~\ref{fig:mmpsp_coverage}.
% Also, we look closer into the behaviour of the tested planners in Figure~\ref{fig:performance_profiles}, where
% we give for CP$_\text{0}^\textit{fn}$, CP$_\text{PP}^\textit{fn}$, \emph{lmcut} and \emph{cpddl}
% detailed run information over the HTG benchmarks. We show the rates at which runs found
% an (optimal) plan, did not find a plan before exhausting resource limits, or could not even start
% the search for a plan, where ``search'' refers to the process of mapping some encoding of
% reformulation of the input PPI and producing a plan. This zoomed-in look reveals the impact of the
% object sets $U$ with large cardinality.

The CP Planners perform very well on all formulations of the HTG instances of \emph{blocksworld} domain. Table~\ref{table:htg} reveals that every configuration of the CP planner solves the full set of HTG instances. Comparing the results on \emph{blocksworld} across to the IPC instance set\footref{table:ipc},
we see the CP$_\text{PP}^\textit{fn}$ planner performs strongly too, even when the IPC instances require much higher number of plan steps than the HTG ones.
% CP$_\text{PP}^\textit{fn}$ in fact perform much strongly than its counterpart \CP{0} and \CP{\text{PP}} which shows the importance of \emph{functional transformation} for the CP encoding.

The CP planners and baseline \textit{lisat} planners find feasible plans for many HTG \emph{childsnack} instances, significantly more than the state-of-the-art $\mathrm{PL}$
baselines, but have trouble finding optimal plans. For any given \emph{childsnack}
instance, there is a huge number of possible optimal plans that are permutations of each other. Without specific
guidance, our planners struggle with symmetries to obtain proofs of optimality quickly. Another structural feature
of optimal plans which is revealed to be problematic is the plan-length. Domains where plans require many actions
(HTG $ged$) are very challenging for our planners, with $lisat$ performing better and
the $\mathrm{PL}$
baseline having remarkably good performance in all of these instances, and
\emph{Madagascar} too when its techniques to bundle several actions apply.

In \emph{rovers}, the \emph{concise} encoding of dependency of preconditions on \emph{static relations} using the \emph{table constraints} (see Section~\ref{section:cp_implementation}) helps the CP$_\text{PP}^\textit{fn}$ achieve better optimizing performance than the baselines. The initial states of HTG \emph{rovers} instances may have $10,000$s of atoms to specify the \emph{static relations}, which otherwise would make the number of constraints~\eqref{init_assignemnts}
 and~\eqref{eq:persistence} to blow up.

% \emph{Rovers} stress our planners because of other structural properties
% of feasible plans than plan-length. Figure \ref{fig:performance_profiles} reports that CP$_\text{0}^\textit{fn}$ and CP$_\text{PP}^\textit{fn}$ ran out of memory in  $90\%$ of the HTG \emph{rovers} instances. Initial states of HTG \emph{rovers} instances have $10,000$s
%  of atoms, which makes the number of constraints~\eqref{init_assignemnts}
%  and~\eqref{eq:persistence} to blow up.

% \textcolor{blue}{Nir: This can be moved to the experimental section.}
% As we can observe in Figure \ref{fig:performance_profiles}, $lmcut$ encounters \emph{memouts} during the \emph{loading} step for many of the HTG instances in \emph{blocks-3ops}, \emph{organic-synthesis}, \emph{pipesworld-tkg}, \emph{rovers}, \emph{visitall-4D} and \emph{visitall-5D}.

The \emph{satisficing} performance of \emph{lisat} with \emph{londex} constraints against $\overline{\textit{lisat}}$ which does not use \emph{londex} shows \emph{impressive} gains in coverage by exploiting the structure of feasible plans to guide the search. The \emph{londex} implementation of \emph{lisat} restricts the \emph{supporter-supportee} distance to $1$, initially. If UNSAT, it increases the distance limit by $1$ and solves again. It repeats the procedure until \emph{timeout} or it finds a solution. This indicates a potential for improving the \emph{satisficing} performance of the CP encoding by designing and integrating planning specific heuristics into the CP solvers.

Overall, all \emph{optimizing} configurations of the CP planners perform much better than the baseline heuristic search planners on the HTG benchmark. The \emph{functional} transformation of the PDDL tasks shows \emph{impressive} performance gains, thereby highlighting the importance of a \emph{concise} encoding of the planning problems.
The performance of CP encoding with \emph{simple} transformation of
the PDDL task lags slightly behind \emph{lisat}. With the functional
transformation CP$_\text{PP}^\textit{fn}$ planner catches up to $lisat$ and their coverage is about the same.

In the \emph{MMPSP} instances, however, the CP approach shows its advantage over all other baseline planners.
% Figure~\ref{fig:mmpsp_coverage}, illustrates an advantage of the CP approach
% over \emph{lisat}, on multi-model project scheduling problems
% (\emph{MMPSP}).
As we can see in Figure~\ref{fig:mmpsp_coverage}, while the CP$_\text{PP}^\textit{fn}$ is slightly ahead of
\emph{lisat} for the original problems, scaling the durations only slowly
degrades CP$_\text{PP}^\textit{fn}$, but immediately makes a significant
difference to \emph{lisat} since it must encode much larger time domains,
while \textsc{CpSat} only lazily grounds the integers encoding times. We note that all the baseline heuristic planners exceeded the memory limit on the original problem set itself.
% This difference in the
% performance \emph{may} be due to how \textsc{CpSat} implements certain
% constraints, e.g. \textsc{CpSat} translates the equality-logic predicates
% into inequalities and collects them into a \emph{precedences
%   propagator}. This choice \emph{may} be more suited to scheduling problems
% on which \textsc{CpSat} has state-of-the-art performance.

% Also, none of the heuristic planner solve \emph{MMPSP} instance, and all of them exceeded the memory limit during the \emph{preprocessing} step.

Lastly, in the IPC instances\footref{table:ipc}, the CP$_\text{PP}^\textit{fn}$ planner performs much better than the baseline \emph{lisat}. However, all CP planners as well as \emph{lisat} do not perform as well as the baseline heuristic search planners. With an exception of \emph{blocks-3ops} and \emph{organic-synthesis}, the coverage of the CP planners is lower than that of heuristic search planners.
This is an artifact of heuristic search being the dominant approach to planning~\cite{geffner2013concise}. Heuristic search planners work well with features showcased in the IPC instances, including significantly longer plans. On the other hand, their performance suffers in problem domains with large numeric types, specially over \emph{Resource-constrained planning(RCP)} problems~\cite{nakhost2012resource}.

\section{Related Work}
\label{related_work}

While causal encodings are the road less travelled in planning as satisfiability, there is significant related
work worth mentioning. Formulations based on the \emph{event calculus}~\cite{shanahan:sat} exist yet are rarely
acknowledged. We note that the event calculus $Initiates$ predicate corresponds
to our notion of output pins, $Holds$ corresponds to our notion of input pin, and $Terminates$ is very much
equivalent to our $\mathrm{persistence}$ predicate in Eq.~\eqref{persistence}. Constraint Programming was a natural
target for research seeking more compact encodings very early~\cite{vanbeek:cplan}, yet our most direct inspiration
was the \textsc{Cpt} planning system~\cite{vidal:06:cpt}, which in some of its
later versions\footnote{Personal communication with Hector Geffner.} used a notion of propagator  for its
\emph{causal link constraints}, very close to that later formalized by Ohrimenko et al. Previous work
have proposed grounded encodings that used KMS notion of \emph{operator splitting}~\cite{ernst:medic,robinson:compact},
which are structurally similar to our constraints for representing ground actions. Our formulation is entirely lifted,
and the only ground atoms we are forced to use are those present in initial and goal state formulas. We end
 acknowledging that the power of $\mathrm{element}(\cdot)$ constraints and their applications to
automated planning were revealed to us after the careful study of Francis et al.~\cite{francis:mda} and
Franc\`{e}s and Geffner~\cite{frances:fstrips}.

\section{Discussion}
\label{discussion}
This paper demonstrates that KMS notion of lifted causal encodings are an approach to planning
as satisfiability that has become viable thanks to the notable advances in CP over the last decade.
We have clearly barely scratched the surface of what is possible, as more propagator procedures
follow directly from the formulation in this paper, seeking powerful synergistic relations between
``planning-specific'' ones and general propagation algorithms. The clear limit to this approach lies in the number
of variables that need to be created. We also have not really looked into the possibility of integrating
or reformulating existing work that is known to enhance notably the scalability of planning as
satisfiability~\cite{rintanen:madagascar:2012}.

Alternatively, and perhaps ultimately too, we need to consider PDR-like
formulations~\cite{suda:pdr,eriksson:certification}, where we only need to construct a CP model
covering two plan steps. We observe that Suda's expedient of reimplementing the obligation
propagation mechanism as a ``Graphplan-like'' algorithm strongly suggests that a CP formulation
with suitably defined propagators could be very performant across PPIs with very diverse structure.
Also, PDR formulations seem to be key for certifying unsolvability~\cite{eriksson:certification}.

%%
%% Bibliography
%%

%% Please use bibtex,

% \bibliography{lipics-v2021-sample-article}
\bibliography{crossref,slots_enc}

\begin{thebibliography}{10}

\bibitem{abdulaziz:upper_bounds}
Mohammad Abdulaziz, Charles Gretton, and Michael Norrish.
\newblock {A State-Space Acyclicity Property for Exponentially Tighter Plan
  Length Bounds}.
\newblock In {\em Int'l Conference on Automated Planning and Scheduling
  (ICAPS)}, volume~27, pages 2--10, 2017.
\newblock \href {https://doi.org/10.1609/icaps.v27i1.13837}
  {\path{doi:10.1609/icaps.v27i1.13837}}.

\bibitem{balcazar:96}
Jos\'{e}~L. Balc\'{a}zar.
\newblock The complexity of searching implicit graphs.
\newblock {\em Artificial Intelligence}, 86(1):171--188, 1996.
\newblock \href {https://doi.org/10.1016/0004-3702(96)00014-8}
  {\path{doi:10.1016/0004-3702(96)00014-8}}.

\bibitem{blum:graphplan}
A.~Blum and M.~L. Furst.
\newblock Fast planning through planning graph analysis.
\newblock In {\em Int'l Conference on Automated Planning and Scheduling
  (ICAPS)}, 1995.

\bibitem{bonet:heuristics}
Blai Bonet and Hector Geffner.
\newblock Planning as heuristic search.
\newblock {\em Artificial Intelligence}, 129:5--33, 2001.

\bibitem{bourbaki:set_theory}
Nicholas Bourbaki.
\newblock {\em Elements of Mathematics: Theory of Sets}.
\newblock Springer-Verlag, 1968.

\bibitem{chen_huang_xing_zhang:londex}
Yixin Chen, Ruoyun Huang, Zhao Xing, and Weixiong Zhang.
\newblock Long-distance mutual exclusion for planning.
\newblock {\em Artificial Intelligence}, 173(2):365–391, 2009.

\bibitem{correa:lifted}
Augusto~B Corr{\^e}a, Florian Pommerening, Malte Helmert, and Guillem Frances.
\newblock Lifted successor generation using query optimization techniques.
\newblock In {\em Int'l Conference on Automated Planning and Scheduling
  (ICAPS)}, 2020.

\bibitem{eriksson:certification}
Salom\'{e} Eriksson and Malte Helmert.
\newblock {Certified Unsolvability for {S}{A}{T} Planning with Property
  Directed Reachability}.
\newblock In {\em Int'l Conference on Automated Planning and Scheduling
  (ICAPS)}, 2020.

\bibitem{ernst:medic}
Michael~D. Ernst, Todd~D. Millstein, and Daniel~S. Weld.
\newblock {Automatic {S}{A}{T}-Compilation of Planning Problems}.
\newblock In {\em Proc. Int'l Joint Conference on Artificial Intelligence
  (IJCAI)}, 1997.

\bibitem{daniel:cpddl}
Daniel Fi\v{s}er.
\newblock cpddl: a library for automated planning.
\newblock https://gitlab.com/danfis, 2022.

\bibitem{fleury:kissat}
ABKFM Fleury and Maximilian Heisinger.
\newblock Cadical, kissat, paracooba, plingeling and treengeling entering the
  sat competition 2020.
\newblock {\em SAT COMPETITION}, 2020:50, 2020.

\bibitem{frances:thesis}
Guillem Franc\`{e}s.
\newblock {\em Effective Planning with Expressive Languages}.
\newblock PhD thesis, DTIC, 2017.

\bibitem{frances:fstrips}
Guillem Frances and Hector Geffner.
\newblock Modeling and computatio in planning: Better heuristics from more
  expressive languages.
\newblock In {\em Int'l Conference on Automated Planning and Scheduling
  (ICAPS)}, 2015.

\bibitem{francis:mda}
Kathryn Francis, Jorge Navas, and Peter~J. Stuckey.
\newblock {Modelling Destructive Assignments}.
\newblock In {\em Int'l Conference on Principles and Practice of Constraint
  Programming (CP)}, pages 315--330, 2013.
\newblock \href {https://doi.org/10.1007/978-3-642-40627-0\_26}
  {\path{doi:10.1007/978-3-642-40627-0\_26}}.

\bibitem{geffner:planning_graphs}
Hector Geffner.
\newblock {Planning Graphs and Knowledge Compilation}.
\newblock In {\em Proc. of Principles of Knowledge Representation and Reasoning
  (KR)}, 2004.

\bibitem{geffner2013concise}
Hector Geffner and Blai Bonet.
\newblock A concise introduction to models and methods for automated planning.
\newblock {\em Synthesis Lectures on Artificial Intelligence and Machine
  Learning}, 8(1):1--141, 2013.

\bibitem{geffner2000admissible}
P~Haslum~H Geffner and Patrik Haslum.
\newblock Admissible heuristics for optimal planning.
\newblock In {\em Proceedings of the 5th Internat. Conf. of AI Planning Systems
  (AIPS 2000)}, pages 140--149, 2000.

\bibitem{green:gps}
C.~Green.
\newblock Application of theorem proving to problem solving.
\newblock In {\em Proc. Int'l Joint Conference on Artificial Intelligence
  (IJCAI)}, 1969.

\bibitem{haas:frame}
A.~Haas.
\newblock The case for domain-specific frame axioms.
\newblock In {\em The Frame Problem in Artificial Intelligence}. Morgan
  Kaufmann, 1987.

\bibitem{haslum2019introduction}
Patrik Haslum, Nir Lipovetzky, Daniele Magazzeni, and Christian Muise.
\newblock An introduction to the planning domain definition language.
\newblock {\em Synthesis Lectures on Artificial Intelligence and Machine
  Learning}, 13(2):1--187, 2019.

\bibitem{helmert:fd}
Malte Helmert.
\newblock The fast downward planning system.
\newblock {\em JAIR}, 26:191--246, 2006.

\bibitem{helmert:lmcut:2009}
Malte Helmert and Carmel Domshlak.
\newblock Landmarks, critical paths and abstractions: what's the difference
  anyway?
\newblock In {\em Int'l Conference on Automated Planning and Scheduling
  (ICAPS)}, 2009.

\bibitem{hoffmann:ff}
J{\"o}rg Hoffmann and Bernhard Nebel.
\newblock The ff planning system: Fast plan generation through heuristic
  search.
\newblock {\em Journal of Artificial Intelligence Research}, 14:253--302, 2001.

\bibitem{book:hooker:2012}
John~N Hooker et~al.
\newblock {\em Integrated methods for optimization}, volume 170.
\newblock Springer, 2012.

\bibitem{holler_behnke:lifted_sat_encoding}
Daniel Höller and Gregor Behnke.
\newblock Encoding lifted classical planning in propositional logic.
\newblock In {\em Proceedings of the International Conference on Automated
  Planning and Scheduling}, volume~32, page 134–144, 2022.

\bibitem{icaps:competitions}
International planning competitions -- classical tracks.
\newblock \url{https://www.icaps-conference.org/competitions/}, 1998-2018.
\newblock Accessed: 19-10-2022.

\bibitem{katz:delfi:2018}
Michael Katz, Shirin Sohrabi, Horst Samulowitz, and Silvan Sievers.
\newblock Delfi: Online planner selection for cost-optimal planning.
\newblock {\em IPC-9 planner abstracts}, pages 57--64, 2018.

\bibitem{kautz:encoding}
Henry Kautz, David McAllester, and Bart Selman.
\newblock {Encoding Plans in Propositional Logic}.
\newblock In {\em Proc. of Principles of Knowledge Representation and Reasoning
  (KR)}, 1996b.

\bibitem{kautz:sat_early}
Henry Kautz and Bart Selman.
\newblock Planning as satisfiability.
\newblock In {\em Proc. of European Conference in Artificial Intelligence
  (ECAI)}, 1992.

\bibitem{lauer:htg:2021}
Pascal Lauer, Alvaro Torralba, Daniel Fi{\v{s}}er, Daniel H{\"o}ller, Julia
  Wichlacz, and J{\"o}rg Hoffmann.
\newblock Polynomial-time in pddl input size: Making the delete relaxation
  feasible for lifted planning.
\newblock In {\em Int'l Conference on Automated Planning and Scheduling
  (ICAPS)}, 2021.

\bibitem{mcallester:nonlin}
David McAllester and David Rosenblitt.
\newblock Systematic nonlinear planning.
\newblock In {\em Proc. of the AAAI Conference (AAAI)}, 1991.

\bibitem{mccarthy:frame}
J.~McCarthy and P.~J. Hayes.
\newblock Some philosophical problems from the standpoint of {Artificial
  Intelligence}.
\newblock {\em Machine Intelligence}, 4:463--502, 1969.

\bibitem{nakhost2012resource}
Hootan Nakhost, J{\"o}rg Hoffmann, and Martin M{\"u}ller.
\newblock Resource-constrained planning: A monte carlo random walk approach.
\newblock In {\em Proceedings of the International Conference on Automated
  Planning and Scheduling}, volume~22, pages 181--189, 2012.

\bibitem{nieuwenhuis:diff_logic}
Robert Nieuwenhuis and Albert Oliveras.
\newblock {DPPL(T) with Exhaustive Theory Propagation and Its Applications to
  Difference Logic}.
\newblock In {\em Proc. of the Int'l Conf. on Computer Aided Verification
  (CAV)}, pages 321--334, 2005.
\newblock \href {https://doi.org/10.1007/11513988\_33}
  {\path{doi:10.1007/11513988\_33}}.

\bibitem{ohrimenko:lcg}
Olga Ohrimenko, Peter~J. Stuckey, and Michael Codish.
\newblock {Propagation via lazy clause generation}.
\newblock {\em Constraints}, 14(3):357--391, 2009.
\newblock \href {https://doi.org/10.1007/s10601-008-9064-x}
  {\path{doi:10.1007/s10601-008-9064-x}}.

\bibitem{pearl:heuristics}
Judea Pearl.
\newblock {\em Heuristics: Intelligent Search Strategies for Computer Problem
  Solving}.
\newblock Addison-Wellesley, 1984.

\bibitem{software:ortools}
Laurent Perron and Vincent Furnon.
\newblock Or-tools.
\newblock https://developers.google.com/optimization/, 2022.

\bibitem{richter:lama}
S.~Richter and M.~Westphal.
\newblock {The LAMA Planner: Guiding Cost-Based Anytime Planning with
  Landmarks}.
\newblock {\em Journal of Artificial Intelligence Research}, 39:127--177, 2010.
\newblock \href {https://doi.org/10.1613/jair.2972}
  {\path{doi:10.1613/jair.2972}}.

\bibitem{rintanen:regression}
Jussi Rintanen.
\newblock Regression for classical and nondeterministic planning.
\newblock {\em ECAI 2008}, page 568–572, 2008.
\newblock \href {https://doi.org/10.3233/978-1-58603-891-5-568}
  {\path{doi:10.3233/978-1-58603-891-5-568}}.

\bibitem{rintanen:admissible}
Jussi Rintanen.
\newblock {Planning with SAT, Admissible Heuristics and A*}.
\newblock In {\em Proc. Int'l Joint Conference on Artificial Intelligence
  (IJCAI)}, 2011.

\bibitem{rintanen:heuristics}
Jussi Rintanen.
\newblock {Planning as satisfiability: Heuristics}.
\newblock {\em Artificial Intelligence}, 193:45--86, 2012.
\newblock \href {https://doi.org/10.1016/j.artint.2012.08.001}
  {\path{doi:10.1016/j.artint.2012.08.001}}.

\bibitem{rintanen:madagascar:2012}
Jussi Rintanen.
\newblock Madagascar: Scalable planning with sat.
\newblock {\em Proceedings of the 8th International Planning Competition
  (IPC-2014)}, 21:1--5, 2014.

\bibitem{robinson:resolution_lifting}
J.~A. Robinson.
\newblock {A Machine-Oriented Logic Based on the Resolution Principle}.
\newblock {\em Journal of the ACM (JACM)}, 12(1):23--41, 1965.
\newblock \href {https://doi.org/10.1145/321250.321253}
  {\path{doi:10.1145/321250.321253}}.

\bibitem{robinson:compact}
Nathan Robinson, Charles Gretton, Duc-Nghia Pham, and Abdul Sattar.
\newblock {A Compact and Efficient {S}{A}{T} Encoding for Planning}.
\newblock In {\em Int'l Conference on Automated Planning and Scheduling
  (ICAPS)}, 2008.

\bibitem{seipp:lab}
Jendrik Seipp, Florian Pommerening, Silvan Sievers, and Malte Helmert.
\newblock {Downward} {Lab}.
\newblock https://doi.org/10.5281/zenodo.790461, 2017.

\bibitem{shanahan:sat}
Murray Shanahan and Mark Witkowski.
\newblock {Event Calculus Planning Through Satisfiability}.
\newblock {\em Journal of Logic and Computation}, 14(5):731--745, 2004.
\newblock \href {https://doi.org/10.1093/logcom/14.5.731}
  {\path{doi:10.1093/logcom/14.5.731}}.

\bibitem{mmpsp:psplib:sprecher1996}
A~Sprecher and R~Kolisch.
\newblock Psplib—a project scheduling problem library.
\newblock {\em Eur. J. Oper. Res}, 96:205--216, 1996.

\bibitem{streeter:sat_search}
Matthew Streeter and Stephen~F. Smith.
\newblock {Using Decision Procedures Efficiently for Optimization}.
\newblock In {\em Int'l Conference on Automated Planning and Scheduling
  (ICAPS)}, 2007.

\bibitem{stuckey:challenge}
Peter~J. Stuckey, T.~Feydy, A.~Schutt, Guido Tack, and J.~Fischer.
\newblock The {MiniZinc} challenge 2008-2013.
\newblock {\em AI Magazine}, 35(2):55--60, 2014.

\bibitem{suda:pdr}
M~Suda.
\newblock {Property Directed Reachability for Automated Planning}.
\newblock {\em Journal of Artificial Intelligence Research}, 50:265--319, 2014.
\newblock \href {https://doi.org/10.1613/jair.4231}
  {\path{doi:10.1613/jair.4231}}.

\bibitem{torralba:symba}
Alvaro Torralba, Vidal Alc\'{a}zar, Peter Kissmann, and Stefan Edelkamp.
\newblock {Efficient symbolic search for cost-optimal planning}.
\newblock {\em Artificial Intelligence}, 242:52--79, 2017.
\newblock \href {https://doi.org/10.1016/j.artint.2016.10.001}
  {\path{doi:10.1016/j.artint.2016.10.001}}.

\bibitem{vanbeek:cplan}
Peter {van Beek} and Xinguang Chen.
\newblock {CPlan: A Constraint Programming Approach to Planning}.
\newblock In {\em Proc. of the AAAI Conference (AAAI)}, 1999.

\bibitem{vidal:06:cpt}
Vincent Vidal and Hector Geffner.
\newblock Branching and pruning: an optimal temporal {P}{O}{C}{L} planner based
  on constraint programming.
\newblock {\em Artificial Intelligence}, 170:298--335, 2006.

\end{thebibliography}

\appendix

\section{\texorpdfstring{$h^m$}~ heuristic over first-order logic existential formulae} \label{h_m:fol_existential}

In this section, we present an account of $h^m$ \emph{admissible} heuristic \cite{geffner2000admissible} and its extension to the first-order logic existential formulas which we then use to test the reachability of a first-order logic formula. The heuristic function $h^m$ is an \emph{admissible approximation} of the
optimal heuristic function $h^*:{\cal{A}} \mapsto \mathbb{N}$, defined over
a conjunction of positive and negative \emph{ground} atoms $
{\cal{A}}:={{\cal P} \times t_1 \times \ldots \times t_{K_{\cal P}} \times
  \{\top, \bot\}}$. $h^*$ is defined using the \emph{regression} model of
the planning problem, in which we regress \emph{backwards} from a
\emph{goal} formula using a \emph{regression function} $rg_a: {\cal A}
\mapsto {\cal A}$ with respect to a ground action $a \in {Act \times
  U^{K_\alpha}}$  \cite{rintanen:regression}. We extend the
\emph{regression} function $rg_a$ to the first-order logic existential
formulas and use it to define $h^m$ for the first-order logic formulas.

Let $\psi$ be a first-order logic existential formula, $\psi := \exists \bar{x}, \psi^L \land \psi^{EQ}$, where $\bar{x}:=(x_1, x_2,\ldots, x_n)$ is a vector of parameters, $\psi^L$ is a conjunction of literals whose interpretation is set by the states $s \in S^R_P$, and $\psi^{EQ}$ is an equality-logic formula. We denote the set of literals in a formula $\phi$ by $\emph{lits}_{\phi}$, the predicate and the argument variables of a literal $l \in \emph{lits}_{\phi}$ by $\emph{predicate}_l$ and $\overline{arg}_l$, respectively, and the polarity of $l$ by $\emph{polarity}_l$.

The \emph{regression} of the formula $\psi$ with respect to an action schema $\alpha := \left<\pre{\alpha}, \eff{\alpha} \right>$ involves identifying the \emph{supporter-supportee} pairs and the \emph{inconsistent} literal pairs between $\eff{\alpha}$ and $\psi^L$. A \emph{supporter-supportee} relationship holds between $l \in \emph{lits}_{\eff{\alpha}}$ and $l' \in \emph{lits}_{\psi^L}$ iff the predicate and the arguments of $l$ \emph{match} that of $l'$ and they have the \emph{same} \emph{polarity}. On the other hand, a literal $l \in \emph{lits}_{\eff{\alpha}}$ is \emph{inconsistent} with $l' \in \emph{lits}_{\psi^L}$ iff the predicate and the arguments of $l$ \emph{match} that of $l'$ but they have the \emph{opposite} \emph{polarity}.

While \emph{regressing} with respect to action schema $\alpha$, we need to consider every possible combination of \emph{supporter-supportee} pairs, i.e. all subsets of the set of \emph{potential} supporter-supportee pairs $\emph{SP}:=\{(l, l') \mid \emph{polarity}_l=\emph{polarity}_{l'},~\emph{predicate}_{l}=\emph{predicate}_{l'},\ l \in \emph{lits}_{\eff{\alpha}}, l' \in \emph{lits}_{\psi^L}\}$, then for each literal pair $(l,l')$ in the subset we add the constraint $\bigwedge_{i=1}^{K_{\cal P}} {arg}_{li} = {arg}_{l'i}$ to the formula obtained through regression. Thus, the \emph{regression} of $\psi$ with respect to $\alpha$ produces a set of formulas, one for each subset in $\emph{SP}$. Similarly, for each pair in the set of \emph{potential} inconsistent pairs $\textit{IP}:=\{(l, l') \mid \emph{polarity}_l \neq \emph{polarity}_{l'},~\emph{predicate}_{l}=\emph{predicate}_{l'},\ l \in \emph{lits}_{\eff{\alpha}}, l' \in \emph{lits}_{\psi^L}\}$, we add the constraint $\bigvee_{i=1}^{K_{\cal P}} {arg}_{li} \neq {arg}_{l'i}$ to the regression formula of $\psi$. We now present the definition of the \emph{regression function} over first-order logic formulas.

\begin{subequations}
	\label{regression}
	\begin{align}
		&rg_\alpha(\psi)  :=\{\exists \bar{x},\ \widehat{rg}_\alpha(\psi^L, \widetilde{\emph{SP}}) \land \widecheck{rg}_\alpha(\psi^{EQ}, \widetilde{\emph{SP}}) \mid \widetilde{\emph{SP}} \subseteq \emph{SP}\}\\
		&\widehat{rg}_\alpha(\psi^L, \widetilde{\emph{SP}})  := \text{Pre}_\alpha \land \bigwedge_{l \in \{\emph{lits}_{\psi^L} \backslash \{l' \mid (l,l') \in \widetilde{\emph{SP}}\}\}} l \\
		&\widecheck{rg}_\alpha( \psi^{EQ}, \widetilde{\emph{SP}})  :=  \psi'^{EQ} \land \bigwedge_{(l,l') \in \widetilde{\emph{SP}}} \land_{i=1}^{K_{\cal P}} arg_{li} = arg_{l'i} \land \bigwedge_{(l,l') \in \textit{IP}} \lor_{i=1}^{K_{\cal P}} arg_{li} \neq arg_{l'i}
	\end{align}
\end{subequations}
\noindent
where we obtain the regression of $\psi^L$ by removing the \emph{supportees} in the set $\widetilde{\emph{SP}}$ from $\psi^L$ and adding the precondition of $\alpha$. The regression of $\psi^{EQ}$ involves a reduction into a \emph{canonical} form $\psi'^{EQ}$  by setting the equality atoms whose arguments do not appear in the arguments of the regression  $\widehat{rg}_\alpha(\psi^L, \widetilde{\emph{SP}})$ to \emph{true}. Then, we add two equality logic formulas, first of which \emph{binds} the arguments of \emph{supporter-supportee} pairs in $\widetilde{\emph{SP}}$ and the second disallows \emph{inconsistent} assignments to the arguments of literals pairs in \emph{IP}.

The $h^m$ heuristics for $\psi := \exists \bar{x}, \psi^L \land \psi^{EQ}$ is defined using regression as follows

\begin{align}
    h^m(\psi):=
\begin{cases}
    0, & s_0 \models \psi\\
    min_{\psi' \in rg_\alpha(\psi), \alpha \in \textit{Act}} h^m(\psi') + cost(\alpha), & |\textit{lits}(\psi^L)| \leq m\\
    max_{\psi \vdash \psi', |\textit{lits}(\psi'^{\textit{L}})|\leq m} h^m(\psi'),& \textit{otherwise}
\end{cases}
\end{align}

$h^m(\psi_\mathsf{P})$ can be efficiently computed using dynamic programming with \textit{memoization}, and $s_0 \models \psi$ can be encoded as a \text{CP} program with \textit{equality and table constraints}. For a fixed value of $m$, the complexity of the above procedure is low polynomial in the number of nodes, i.e. the number of first-order logic formulas $O({|{\cal P}|^m}\cdot 2^{K_{\cal P}})$. An important property of the above procedure is that it is \emph{entirely lifted}, i.e. no \emph{ground} atom would occur in the formulas obtained through regression if no action schema has \emph{ground} atoms. This is specially useful in \emph{hard-to-ground}(HTG) domains where the size of \emph{ground} theory renders the \emph{translation} methods \cite{helmert:fd} used by most planners intractable.

\section{Proofs}

\begin{theorem}[Systematicity~\cite{mcallester:nonlin}]
	\label{theorem:systematicity}
	Let $T_{P,N_{\pi}}$ $=$ $({\cal X}$, ${\cal C})$ be the CSP given by the decision variables ${\cal X}$ in Table~1
	and set of constraints ${\cal C}$~$(4)$--$(12)$, for some suitable choice of $N_{\pi}$.
	There exists an assignment $\xi$ onto variables ${\cal X}$ that satisfies every constraint in ${\cal C}$ if an only if there exists
	a feasible solution $\sigma$ to the PPI $P$, whose actions are given by slots, argument, input and output pin variables values.
\end{theorem}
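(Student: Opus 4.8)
The plan is to prove both directions of the equivalence by exhibiting a correspondence between satisfying assignments $\xi$ and feasible trajectories $\sigma = s_0, a_1, s_1, \ldots, a_{N_\pi}, s_{N_\pi}$ of length exactly $N_\pi$; note every slot $1, \ldots, N_\pi$ receives a genuine schema since $act_i \in [Act]$, so the corresponding plans have precisely $N_\pi$ actions. The linchpin is a bookkeeping lemma reconciling the \emph{implicit} state representation of the encoding with the \emph{explicit} successor states of standard FSTRIPS semantics. Given the ground actions $a_i = \alpha(\bar{v})$ read off from $act_i$ and $\overline{arg}_i$, let $s_0, \ldots, s_{N_\pi}$ be the state sequence in which $s_i$ is obtained from $s_{i-1}$ by writing the effect atoms of $a_i$ and leaving every other functional term unchanged. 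I would prove by induction on $j$ that the value of a term $f(\bar{x})$ in $s_j$ equals $y$ if and only if the \emph{latest} slot $i \leq j$ writing $f(\bar{x})$ assigns it $y$, with slot $0$ supplying the initial values through~\eqref{init_assignemnts}; this is exactly the persistence relation of Definition~\ref{def:persistence}.

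With the lemma in place, the soundness direction (assignment $\Rightarrow$ plan) is immediate. Constraints~\eqref{argument_binding}--\eqref{pins:bind_constraint} ensure that the input and output pins of each slot faithfully encode the ground precondition and effect atoms of its action, and~\eqref{pin_activity} marks exactly the used pins as active. For every active input pin $in_{jk}$ encoding a precondition atom $f(\bar{x}) = z$, the support $spt_{jk} = (i,l)$ and the matching constraint~\eqref{persist:2} force the output pin $out_{il}$ to carry the same term and value, while the non-interference constraint~\eqref{persist:3}, expanded as~\eqref{eq:persistence}, forbids any intermediate slot $i < j' < j$ from rewriting $f(\bar{x})$ with a different value. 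By the lemma the atom holds in $s_{j-1}$, so $a_j$ is applicable, and since the standard FSTRIPS semantics \emph{defines} $\to$ through the schemas we obtain a genuine transition $s_{j-1} \to_{a_j} s_j$. The goal constraint~\eqref{goal_constraints} installs each goal atom as a supported, persisted input pin at slot $N_\pi$, so the same argument yields $s_{N_\pi} \in S_G$ and $\sigma \in \Pi_P$.

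For completeness (plan $\Rightarrow$ assignment) I would read $\xi$ off a feasible $\sigma$. Because $Act$ is systematic, each transition $s_{i-1} \to_{a_i} s_i$ is witnessed by a schema $\alpha(\bar{v})$ meeting~\eqref{systematicity}; setting $act_i = \alpha$ and $\overline{arg}_i = \bar{v}$ determines the pin values through~\eqref{input_and_output_pins}--\eqref{pins:bind_constraint} and the activity flags through~\eqref{pin_activity}. Conditions~\eqref{syst:eff} and~\eqref{syst:frame} guarantee that the states of $\sigma$ coincide with the explicit successors above, so the lemma applies. To discharge the support constraints, for each active precondition pin $in_{jk}$ encoding $f(\bar{x}) = z$ (which holds in $s_{j-1}$) I set $spt_{jk}$ to the latest slot $i < j$ that writes $f(\bar{x})$, defaulting to slot $0$: the lemma forces that writer's value to be $z$, satisfying~\eqref{persist:2}, and since no slot strictly between $i$ and $j$ writes $f(\bar{x})$ every clause of~\eqref{persist:3} holds. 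Inactive pins take the $\mathrm{null}$ support, meeting~\eqref{persist:1}, the goal pins at slot $N_\pi$ are treated identically, and the remaining domain constraints are verified routinely.

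The principal obstacle is the bookkeeping lemma, and within it the completeness-side support choice: one must verify that designating the \emph{most recent} writer of each term simultaneously satisfies value matching~\eqref{persist:2} and all non-interference clauses~\eqref{persist:3} across every intermediate slot, and recognise that the frame condition~\eqref{syst:frame} is exactly what forces the trajectory states to agree with the explicit successors so that ``latest writer'' tracks the true value. Some care is also needed to ensure the implicit state is \emph{well defined}: since effect formulas~\eqref{effects} are conjunctions of equalities, one should confirm that each slot assigns every term at most one value, so that ``latest writer'' is unambiguous and the induction underlying the lemma is sound.
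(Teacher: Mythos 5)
Your proposal is correct and takes essentially the same route as the paper's own proof: both directions are argued via the natural correspondence between satisfying assignments and length-$N_\pi$ trajectories, with the pin constraints~\eqref{input_and_output_pins} encoding ground preconditions and effects, constraints~\eqref{init_assignemnts} anchoring slot $0$ to $s_0$, the goal pins~\eqref{goal_constraints} at slot $N_\pi$, and systematicity of $Act$ supplying the schema choices in the converse direction. The paper's argument is only a terse sketch, so your bookkeeping lemma (latest-writer semantics for the implicitly represented states, matching Definition~\ref{def:persistence}) and the explicit latest-writer choice of the $spt_{jk}$ supports in the completeness direction are added rigor on the same skeleton rather than a different approach; they fill in precisely what the paper's phrase ``follows trivially from the definitions'' leaves implicit.
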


\begin{proof}
	It follows trivially from the definitions given in the previous sections that assignments $\xi$ encode finite trajectories
	$\sigma=s_0 a_1 s_1 \ldots s_{N_\pi}$.
	To prove the forward direction, it suffices to observe that (1) constraints~$(5)$ on input
	and output pins for a slot $i$ define \emph{implicitly} sets of pairs of states $(s_{i-1}, s_i) \in \to_{act_i}$, the set of transitions
	corresponding to the schema $act_i = \alpha$ assigned to the slot, (2) constraints~$(11)$
	ensure that for $i=1$ the predecessor of $s_1$ corresponds with the initial state $s_0$ given in the definition of the PPI $P$, and (3)
	the last state in the trajectory given by $\xi$ the equality atoms in $Goal$. To prove the reverse direction, we note that each pair
	of consecutive states $s_{i-1}$ and $s_i$ must belong to exactly one of the transition sets $\to_{\alpha}$.
	From the definition of this set the schema, argument and pins assignments follow directly.$\square$
	\end{proof}

\begin{theorem}
	\label{theorem:function_transformation}
	If ${\cal R}_P$ is a \textit{mapping} in all possible interpretations $s \in S^R_\Pi$, then ${\Pi'}$, the transformation of the problem ${\Pi}$ which encodes the predicate $P$ as a function $f_P : \domain{x} \mapsto \domain{y}$, has the same set of reachable states as $\Pi$, i.e. $S^R_\Pi = S^R_{\Pi'}$
\end{theorem}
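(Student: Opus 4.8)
The plan is to exhibit an explicit bijection $\phi$ between those states of $\Pi$ in which $\mathcal{R}_P$ is a mapping and the states of $\Pi'$, and then argue by induction on trajectory length that $\phi$ restricts to a transition-preserving bijection between $S^R_\Pi$ and $S^R_{\Pi'}$. Concretely, for a state $s$ of $\Pi$ in which $\mathcal{R}_P$ is right-unique and left-total, I would define $\phi(s)$ to be the state of $\Pi'$ that agrees with $s$ on the interpretation of every symbol other than $P$, and sets $f_P(x) = y$ exactly when $P(x,y)$ holds in $s$, assigning the dummy value $f_P(x) = \Box$ when no such $y$ exists. Right-uniqueness guarantees $f_P$ is well defined (at most one value per $x$), and the dummy $\Box$ restores left-totality, so $\phi$ is a well-defined bijection onto the states of $\Pi'$, with inverse reading $P(x,y)$ off $f_P(x) = y$. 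Since $\Pi$ and $\Pi'$ are stated over different signatures, I would note at the outset that the claimed equality $S^R_\Pi = S^R_{\Pi'}$ is to be understood as identification of states through $\phi$.

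First I would dispatch the base case: by the construction of $\Pi'$ in Section~\ref{Functional transformation}, the initial state $s_0'$ of $\Pi'$ is precisely $\phi(s_0)$, since it records $f_P(x) = y$ iff $P(x,y) \in s_0$. The heart of the argument is the inductive step showing $\phi$ is a transition isomorphism on reachable states: for every reachable $s$ and every ground action $a$ with functional counterpart $a'$, we have $s \to_a \bar{s}$ in $\Pi$ if and only if $\phi(s) \to_{a'} \phi(\bar{s})$ in $\Pi'$. This splits into a precondition equivalence and an effect equivalence. For preconditions, each positive atom $P(x,y)$ translates to $f_P(x) = y$ and each negative atom $\neg P(x,y)$ to $f_P(x) \neq y$; the equivalence $s \models \pre{a}$ iff $\phi(s) \models \pre{a'}$ then follows literal-by-literal from the definition of $\phi$ together with right-uniqueness, which is exactly what makes $\neg P(x,y)$ faithfully captured by the disequality $f_P(x) \neq y$ rather than by the weaker statement that some pair is absent.

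For effects, the key observation is that the functional assignment $f_P(x) := y$ carries built-in overwrite semantics, so it simultaneously realises the relational add of $(x,y)$ and the implicit deletion of the pair previously held for $x$. I would verify that applying $a'$ to $\phi(s)$ yields exactly $\phi(\bar{s})$ by checking, for each $x$ touched by the effect and for each untouched $x$ (where persistence of the old value on both sides gives agreement), that the value assigned to $f_P(x)$ in the successor equals the unique $y'$ with $P(x,y') \in \bar{s}$ (or $\Box$ if none). With the transition isomorphism and the matching initial states in hand, a routine induction on the length of trajectories gives $\phi(S^R_\Pi) = S^R_{\Pi'}$ and, via $\phi^{-1}$, the reverse inclusion, whence the identified sets coincide.

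I expect the main obstacle to be the effect equivalence in the presence of delete effects. In the relational world an action may delete $P(x,y)$ without adding a replacement, which would break left-totality and leave $f_P(x)$ with no natural value; the hypothesis that $\mathcal{R}_P$ is a mapping in \emph{all} reachable states is precisely what either rules this out or forces a simultaneous add, and the delicate part is to invoke this hypothesis only at reachable states so that the induction never leaves the region where $\phi$ is defined. This is why the reachability restriction on the hypothesis is essential: it cannot be weakened to an assumption that merely holds at $s_0$, and the proof must carry the invariant ``$\mathcal{R}_P$ is a mapping'' along every trajectory rather than assuming it globally.
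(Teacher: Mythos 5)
Your proposal is correct, and its core idea --- that the functional representation $f_P$ enforces exactly the right-unique and left-total conditions, so that on reachable states the relational and functional representations describe the same states --- is the same one the paper relies on. The difference is one of rigor: the paper's proof is a single sentence asserting that applying the transformation ``would not alter the reachable state space since the functional transformation implicitly enforces the same conditions,'' with no explicit state correspondence, no induction over trajectories, and no treatment of how preconditions and effects translate. Your proof supplies precisely the machinery that assertion presupposes: the bijection $\phi$ (with $\Box$ absorbing points where the relation is not left-total), the base case at $s_0$, and the transition isomorphism split into a precondition equivalence --- where you correctly observe that right-uniqueness is what makes $\neg P(x,y)$ faithfully rendered by the disequality $f_P(x) \neq y$ --- and an effect equivalence resting on the overwrite semantics of functional assignment. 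Your closing observation, that a relational delete without a compensating add is exactly the situation the reachability-restricted hypothesis must exclude, and that the induction must therefore be carried out entirely within $S^R_\Pi$ rather than over all states, identifies a genuine subtlety that the paper's proof passes over in silence. In short, both arguments travel the same road; yours actually walks it, and would be the version to include if the result were to be stated with full formality.
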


\begin{proof}
	If the \textit{right-unique} and \textit{left-total} properties \textit{hold} for ${\cal R}_P$ in all $s \in S^R_{\Pi}$, then applying the functional transformation would not alter the reachable state space since the functional transformation \emph{implicitly} enforces the same conditions, i.e. $\forall\ x_1, x_2 \in \domain{x}, y_1,y_2 \in \domain{y},\ (f_P(x_1)=y_1) \land (f_P(x_2)=y_2) \land (x_1=x_2) \rightarrow (y_1=y_2)$ and $\forall\ x \in \domain{x},\ \exists\ y \in \domain{y},\ (f_P(x)=y)$.$\square$
\end{proof}

\section{Additional Results: Figures and Tables}

Figure~\ref{fig:performance_profiles} depicts the performance profiles of CP$_0^\textit{fn} \mathrm{(1)}$, CP$_\text{PP}^\textit{fn}  \mathrm{(2)}$, \textit{cpddl}$\mathrm{(3)}$ and \textit{lmcut}$\mathrm{(4)}$, and
Table~\ref{table:ipc} shows of \emph{coverage} of baseline and CP planners on the IPC benchmarks.

\begin{figure*}[ht!]
	\centering
	\includegraphics[width=1\linewidth]{}
	\caption{ Plot depicting performance profiles of CP$_0^\textit{fn} \mathrm{(1)}$, CP$_\text{PP}^\textit{fn}  \mathrm{(2)}$, \textit{cpddl}$\mathrm{(3)}$, and \textit{lmcut}$\mathrm{(4)}$ on the HTG benchmark set. The percentage of instances which \emph{solved $\mathrm{(s\%)}$ (optimally)}, reported \emph{memout} during \emph{loading} step $\mathrm{(lm\%)}$, reported \emph{memout} during the search $\mathrm{(sm\%)}$, and \emph{timed-out} $\mathrm{(t\%)}$ are shown. }
	\label{fig:performance_profiles}
\end{figure*}

\begin{table*}[ht!]
	\centering
	\def\arraystretch{0.94}
	{\footnotesize
	\scalebox{0.9}{
		\begin{tabular}
			{%
				@{}l%
				@{\extracolsep{5pt}}r%
				@{\extracolsep{5pt}}r%
        @{\extracolsep{5pt}}r%
        @{\extracolsep{5pt}}r%
        @{\extracolsep{5pt}}r%
        @{\extracolsep{5pt}}r%
        @{\extracolsep{5pt}}r%
        @{\extracolsep{5pt}}r%
				@{\extracolsep{5pt}}r
				% |%
				% @{\extracolsep{5pt}}r%
				% @{\extracolsep{5pt}}r%
				% @{\extracolsep{5pt}}r%
				% @{\extracolsep{5pt}}r%
				% @{\extracolsep{5pt}}r%
				%
				@{}}

      \cmidrule{2-10}
			\textit{IPCs(opt)} &  \multicolumn{9}{c}{Optimal Solution}
			% & \multicolumn{5}{c}{Satisficing Solution}
			\\
			\midrule
      Domain{} & $\#$Instances & CP$_{0}$ & CP$_\text{PP}$ & CP$_\text{PP}^\textit{fn}$ & $\textsc{LiSAT}$ & \textit{lmcut} & \textit{sbd} &\textit{cpddl} & \textit{delfi1} \\
			\cmidrule{2-10}

			\multicolumn{10}{c}{\textit{Not-grounded} - action schemas do not have ground atoms} \\
			\midrule
			barman-opt11                  &     20 &                                           0 &                                            0 &                                                 0 &                            0 &                 4 &               9 &                     12 &                  7 \\
			barman-opt14                  &     14 &                                           0 &                                            0 &                                                 0 &                            0 &                 0 &               3 &                      6 &                  2 \\
			blocks                               &     35 &                                          17 &                                           19 &                                                28 &                           26 &                28 &              30 &                     31 &                 27 \\
			blocks-3ops                          &     35 &                                          19 &                                           23 &                                                35 &                            5 &                26 &              25 &                     30 &                 20 \\
			childsnack-opt14              &     20 &                                           0 &                                            0 &                                                 0 &                            0 &                 0 &               4 &                      4 &                  6 \\
			data-network-opt18            &     20 &                                          14 &                                           15 &                                                15 &                            0*&                20 &              17 &                     17 &                 17 \\
			depot                                &     22 &                                           1 &                                            1 &                                                 2 &                            2 &                 7 &               5 &                      7 &                 12 \\
			driverlog                            &     20 &                                           4 &                                            4 &                                                 4 &                            5 &                14 &              12 &                     12 &                 15 \\
			elevators-opt08               &     30 &                                           2 &                                            3 &                                                 5 &                            6 &                20 &              22 &                     23 &                 20 \\
			elevators-opt11               &     20 &                                           1 &                                            1 &                                                 3 &                            4 &                17 &              18 &                     18 &                 16 \\
			floortile-opt11               &     20 &                                           0 &                                            0 &                                                 0 &                            0 &                 6 &              14 &                     14 &                 12 \\
			floortile-opt14               &     20 &                                           0 &                                            0 &                                                 0 &                            0 &                 5 &              20 &                     20 &                 17 \\
			freecell                             &     80 &                                           6 &                                            6 &                                                 6 &                           12 &                15 &              21 &                     22 &                 18 \\
			ged-opt14                     &     20 &                                          19 &                                           19 &                                                20 &                           20 &                20 &              20 &                     20 &                 20 \\
			grid                                 &      5 &                                           1 &                                            1 &                                                 2 &                            1 &                 2 &               2 &                      2 &                  3 \\
			gripper                              &     20 &                                           1 &                                            2 &                                                 2 &                            2 &                 7 &              20 &                     20 &                 20 \\
			hiking-opt14                  &     20 &                                           4 &                                            4 &                                                 4 &                            8 &                 9 &              14 &                     15 &                 19 \\
			logistics00                          &     28 &                                           2 &                                            2 &                                                 3 &                            6 &                20 &              18 &                     19 &                 21 \\
			logistics98                          &     35 &                                           0 &                                            0 &                                                 1 &                            1 &                 6 &               5 &                      5 &                  8 \\
			miconic                              &    150 &                                          22 &                                           23 &                                                30 &                           32 &               141 &             104 &                    104 &                136 \\
			mprime                               &     35 &                                          31 &                                           32 &                                                33 &                           33 &                22 &              23 &                     25 &                 25 \\
			mystery                              &     30 &                                          18 &                                           18 &                                                18 &                           19 &                17 &              13 &                     15 &                 17 \\
			nomystery-opt11               &     20 &                                           6 &                                            6 &                                                 6 &                           10 &                14 &              13 &                     14 &                 14 \\
			organic-synthesis-opt18       &     20 &                                          20 &                                           20 &                                                20 &                           20 &                 7 &               7 &                     13 &                  8 \\
			organic-synthesis-split-opt18 &     20 &                                           8 &                                           12 &                                                12 &                           10 &                16 &              13 &                      8 &                 12 \\
			parking-opt11                 &     20 &                                           1 &                                            1 &                                                 1 &                            1 &                 2 &               1 &                      1 &                  5 \\
			parking-opt14                 &     20 &                                           0 &                                            0 &                                                 0 &                            0 &                 3 &               0 &                      1 &                  7 \\
			pegsol-08                     &     30 &                                           9 &                                            8 &                                                11 &                           20 &                27 &              28 &                     29 &                 28 \\
			pegsol-opt11                  &     20 &                                           1 &                                            1 &                                                 1 &                            6 &                17 &              18 &                     19 &                 18 \\
			pipesworld-notankage                 &     50 &                                          12 &                                           12 &                                                12 &                           14 &                17 &              15 &                     15 &                 25 \\
			pipesworld-tankage                   &     50 &                                           9 &                                           10 &                                                11 &                           11 &                12 &              16 &                     17 &                 22 \\
			rovers                               &     40 &                                           4 &                                            4 &                                                 4 &                            4 &                 8 &              14 &                     14 &                 12 \\
			satellite                            &     36 &                                           3 &                                            3 &                                                 4 &                            5 &                 7 &              10 &                     11 &                 14 \\
			scanalyzer-08                 &     30 &                                          10 &                                            9 &                                                13 &                           12 &                 9 &              13 &                     13 &                 17 \\
			scanalyzer-opt11              &     20 &                                           7 &                                            6 &                                                10 &                            9 &                 6 &              10 &                     10 &                 13 \\
			sokoban-opt08                 &     30 &                                           0 &                                            0 &                                                 0 &                            2 &                24 &              25 &                     28 &                 28 \\
			sokoban-opt11                 &     20 &                                           0 &                                            0 &                                                 0 &                            0 &                19 &              20 &                     20 &                 20 \\
			spider-opt18                  &     20 &                                           0 &                                            0 &                                                 0 &                            0 &                 6 &               6 &                      6 &                  8 \\
			storage                              &     30 &                                          12 &                                           11 &                                                13 &                            0* &                15 &              14 &                     15 &                 17 \\
			termes-opt18                  &     20 &                                           0 &                                            0 &                                                 0 &                            0 &                 5 &              16 &                     16 &                 12 \\
			tetris-opt14                  &     17 &                                           2 &                                            2 &                                                 3 &                            3 &                 5 &              10 &                     12 &                 13 \\
			tidybot-opt11                 &     20 &                                           1 &                                            3 &                                                 3 &                            0* &                14 &              12 &                     11 &                 17 \\
			tidybot-opt14                 &     20 &                                           0 &                                            0 &                                                 0 &                            0* &                 9 &               5 &                      7 &                 13 \\
			tpp                                  &     30 &                                           4 &                                            4 &                                                 4 &                            5 &                 7 &               8 &                      8 &                 11 \\
			transport-opt08               &     30 &                                           6 &                                            6 &                                                 6 &                            6 &                12 &              14 &                     14 &                 13 \\
			transport-opt11               &     20 &                                           1 &                                            1 &                                                 1 &                            1 &                 8 &              10 &                     11 &                 10 \\
			transport-opt14               &     20 &                                           1 &                                            1 &                                                 1 &                            2 &                 7 &               9 &                     10 &                  9 \\
			visitall-opt11                &     20 &                                           8 &                                            9 &                                                11 &                           11 &                10 &              12 &                     12 &                 17 \\
			visitall-opt14                &     20 &                                           2 &                                            3 &                                                 5 &                            5 &                 5 &               6 &                      6 &                 13 \\
			woodworking-opt08             &     30 &                                           7 &                                            7 &                                                 7 &                           10 &                17 &              30 &                     29 &                 28 \\
			woodworking-opt11             &     20 &                                           2 &                                            2 &                                                 2 &                            5 &                12 &              20 &                     20 &                 20 \\
			zenotravel                           &     20 &                                           7 &                                            7 &                                                 8 &                            9 &                13 &               9 &                      0 &                 12 \\

			\midrule
			\multicolumn{10}{c}{\textit{Partially-grounded} - action schemas have a few ground atoms} \\
			\midrule
			agricola-opt18                &     20 &                                           0 &                                            0 &                                                 0 &                            3 &                 0 &              14 &                     12 &                 10 \\
			airport                              &     50 &                                           6 &                                            7 &                                                 7 &                            7 &                28 &              23 &                     24 &                 23 \\
			movie                                &     30 &                                          30 &                                           30 &                                                30 &                            0* &                30 &              30 &                     30 &                 30 \\
			openstacks-opt08              &     30 &                                           0 &                                            1 &                                                 1 &                            2 &                 8 &              30 &                     30 &                 30 \\
			openstacks-opt11              &     20 &                                           0 &                                            0 &                                                 0 &                            0 &                 3 &              20 &                     20 &                 20 \\
			openstacks-opt14              &     20 &                                           0 &                                            0 &                                                 0 &                            0 &                 0 &              15 &                     16 &                 12 \\
			parcprinter-08                &     30 &                                           6 &                                            6 &                                                 6 &                            0* &                22 &              30 &                     30 &                 30 \\
			parcprinter-opt11             &     20 &                                           3 &                                            3 &                                                 3 &                            0* &                16 &              20 &                     20 &                 20 \\
			pathways                             &     30 &                                           3 &                                            4 &                                                 4 &                            0* &                 5 &               5 &                      5 &                  5 \\
			snake-opt18                   &     20 &                                           3 &                                            3 &                                                 6 &                            0* &                 7 &               3 &                      5 &                 11 \\

			\midrule
			\multicolumn{10}{c}{\textit{Fully-grounded} - action schemas only have ground atoms} \\
			\midrule
			openstacks                    &     30 &                                           0 &                                            0 &                                                 0 &                            0 &                 7 &              18 &                     17 &                 11 \\
			petri-net-alignment-opt18     &     20 &                                           0 &                                            0 &                                                 0 &                            0 &                 3 &              18 &                     20 &                 20 \\
			psr-small                            &     50 &                                          44 &                                           46 &                                                46 &                            0* &                49 &              50 &                     50 &                 50 \\
			trucks                        &     30 &                                           2 &                                            2 &                                                 2 &                            0* &                10 &              11 &                     14 &                  9 \\
			\midrule
			Total                                &   1862 &                                         402 &                                          423 &                                               485 &                          375 &               927 &            1090 &                   1124 &               1195 \\
			\bottomrule
      \\
		\end{tabular}
	}

		\caption{Coverage of different planners on \textit{IPCs -- optimal track} benchmark domains. * indicates the domains in which the preprocessing (\emph{parsing and encoding}) step of \emph{lisat} rendered the instances \emph{unsolvable}.}\label{table:ipc}
	}

\end{table*}

\end{document}